\def\eqref#1{equation~\ref{#1}}
\def\1{\bm{1}}
\def\va{{\bm{a}}}
\def\vu{{\bm{u}}}
\DeclareMathAlphabet{\mathsfit}{\encodingdefault}{\sfdefault}{m}{sl}
\SetMathAlphabet{\mathsfit}{bold}{\encodingdefault}{\sfdefault}{bx}{n}
\newcommand{\qffl}{$q$-\texttt{FFL}\xspace}
\newcommand{\afl}{\texttt{AFL}\xspace}
\newcommand{\fedsgd}{\texttt{FedSGD}\xspace}
\newcommand{\fedavg}{\texttt{FedAvg}\xspace}
\newcommand{\qfflavg}{$q$-\texttt{FedAvg}\xspace}
\newcommand{\qfflsgd}{$q$-\texttt{FedSGD}\xspace}
\newcommand{\qmaml}{$q$-\texttt{MAML}\xspace}
\newcommand{\maml}{\texttt{MAML}\xspace}
\newcommand{\var}{\mathbf{Var}}
\theoremstyle{plain}
\newtheorem{theorem}{Theorem}
\newtheorem{lemma}[theorem]{Lemma}
\newtheorem{remark}[theorem]{Remark}
\theoremstyle{definition}
\newtheorem{definition}[theorem]{Definition}
\title{Fair Resource Allocation in   Federated \\ Learning}
\author{Tian Li \\ CMU \\ \href{mailto:tianli@cmu.edu}{tianli@cmu.edu} \\
\And
Maziar Sanjabi \\ Facebook AI \\ \href{mailto:maziars@fb.com}{maziars@fb.com} \\
\And
Ahmad Beirami \\ Facebook AI \\ \href{mailto:beirami@fb.com}{beirami@fb.com} \\
\And
Virginia Smith \\ CMU \\
\href{mailto:smithv@cmu.edu}{smithv@cmu.edu}
}
\begin{document}

\maketitle

\begin{abstract}
Federated learning involves training statistical models in massive, heterogeneous networks. Naively minimizing an aggregate loss function in such a network may disproportionately advantage or disadvantage some of the devices. In this work, we propose $q$-Fair Federated Learning (\qffl), a novel optimization objective inspired by fair resource allocation in wireless networks that encourages a more {fair} (specifically, a more \textit{uniform}) accuracy distribution across  devices in federated networks. To solve \qffl, we devise a communication-efficient method, \qfflavg, that is suited to federated networks. We validate both the effectiveness of \qffl and the efficiency of \qfflavg on a suite of federated datasets with both convex and non-convex models, and show that \qffl (along with \qfflavg) outperforms existing baselines in terms of the resulting fairness, flexibility, and efficiency. 
\end{abstract}

\section{Introduction}

Federated learning is an attractive paradigm for fitting a model to data generated by, and residing on, a network of remote devices~\citep{mcmahan2016communication}. 
Unfortunately, naively minimizing an aggregate loss in a large network may disproportionately advantage or disadvantage the model performance on some of the devices. 
For example, although the accuracy may be high on average, there is no accuracy guarantee for individual devices in the network. 
This is exacerbated by the fact that the data are often heterogeneous in federated networks both in terms of size and distribution, and model performance can thus vary widely. 
In this work, we therefore ask: Can we devise an efficient federated optimization method to encourage a \emph{more fair (i.e., more uniform) distribution} of the model performance across devices in federated networks?

There has been tremendous recent interest in developing fair methods for machine learning~\citep[see, e.g.,][]{cotter2018optimization, dwork2012fairness}. 
However, current approaches do not adequately address concerns in the federated setting. For example, a common definition in the fairness literature is to enforce \textit{accuracy parity} between protected groups\footnote{While fairness is typically concerned with performance between ``groups'', we define fairness in the federated setting at a more granular scale in terms of the devices in the network. We note that devices may naturally combine to form groups, and thus use these terms interchangeably in the context of prior work.}~\citep{zafar2017fairness}. For devices in massive federated networks, however, it does not make sense for the accuracy to be \textit{identical} on each device given the significant variability of data in the network. 
Recent work has taken a step towards addressing this by introducing \textit{good-intent fairness}, in which the goal is instead to ensure that the training procedure does not overfit a model to any one device at the expense of another~\citep{mohri2019agnostic}. However, the proposed objective is rigid in the sense that it only maximizes the performance of the worst performing device/group, and has only be tested in small networks (for 2-3 devices). In realistic federated learning applications, it is natural to instead seek methods that can flexibly trade off between overall performance and fairness in the network, and can be implemented at scale across hundreds to millions of devices.

In this work, we propose \qffl, a novel optimization objective that addresses fairness issues in federated learning. Inspired by work in fair resource allocation for wireless networks, \qffl minimizes an aggregate \emph{reweighted} loss parameterized by $q$ such that the devices with higher loss are given higher relative weight.
We show that this objective encourages a device-level  definition of fairness in the federated setting, which generalizes standard accuracy parity by measuring the degree of uniformity in performance across devices. 
As a motivating example, we examine the test 
\begin{wrapfigure}{l}{0.42\textwidth}
    \raggedleft
    \includegraphics[trim=0 0 0 50,clip,width=0.41\textwidth]{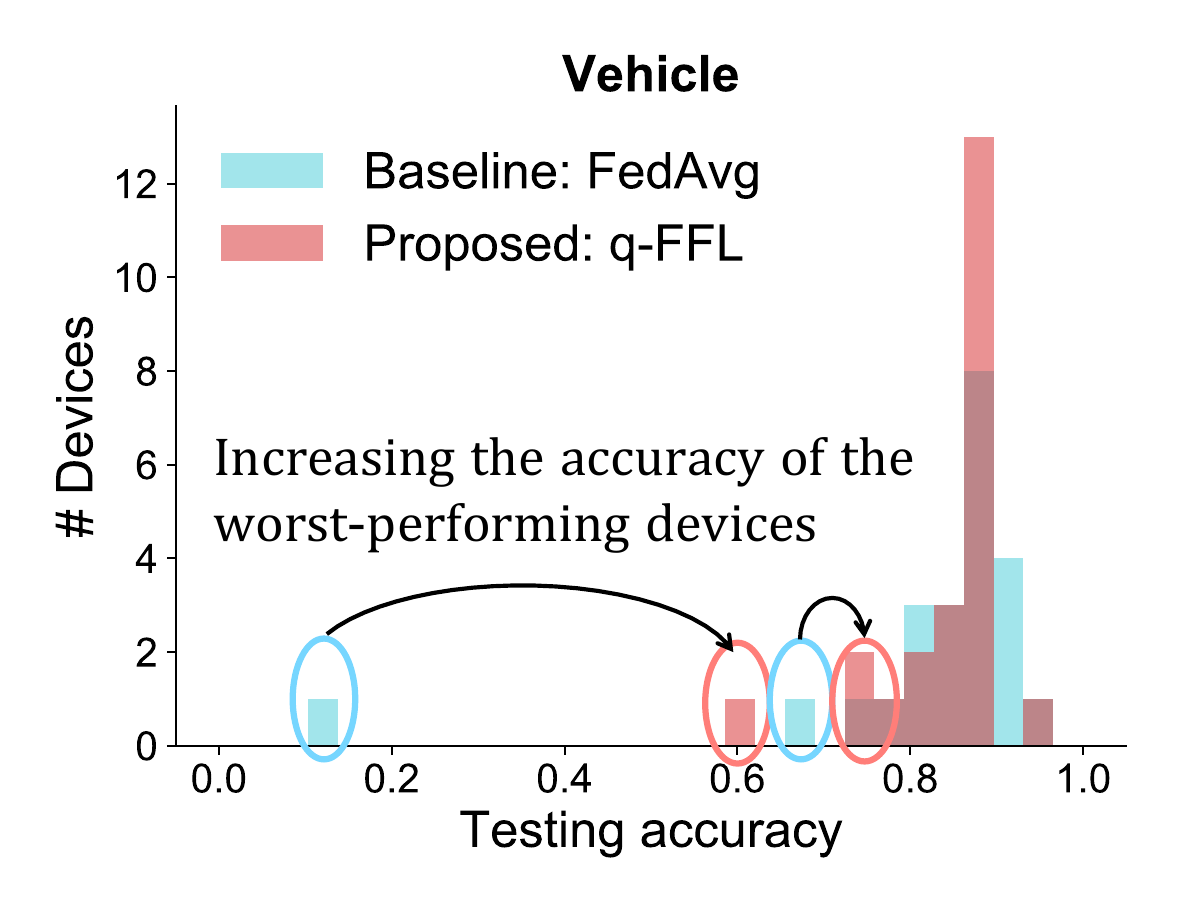}
    \caption{Model performance (e.g., test accuracy) in federated networks can vary widely across devices. Our objective, \qffl, aims to increase the fairness/uniformity of model performance while maintaining average performance.}
    \label{fig: example}
    \vspace{-5mm}
\end{wrapfigure}
accuracy distribution of a model trained via a baseline approach (\fedavg) vs. \qffl in Figure~\ref{fig: example}. 
Due to the variation in the data across devices, the model accuracy is quite poor on some devices. 
By using \qffl, we can maintain the same overall average accuracy while ensuring a more fair/uniform quality of service across the network. Adaptively minimizing our \qffl objective results in a  flexible framework that can be tuned depending on the desired amount of fairness.

To solve \qffl in massive federated networks, we additionally propose a lightweight and scalable distributed method, \qfflavg. Our method carefully accounts for important characteristics of the federated setting such as communication-efficiency and low participation of devices~\citep{bonawitz2019towards,mcmahan2016communication}. The method also reduces the overhead of tuning the hyperparameter $q$ in \qffl by dynamically estimating the step-sizes associated with different values of $q$. 

Through extensive experiments on federated datasets with both convex and non-convex models, we demonstrate the fairness and flexibility of \qffl and  the efficiency of \qfflavg compared with existing baselines. 
In terms of fairness, \qffl is able to reduce the variance of accuracies across  devices by 45\% on average while maintaining the same overall average accuracy. In terms of efficiency, our distributed method, \qfflavg, is capable of solving the proposed objective orders-of-magnitude more quickly than other baselines. 
Finally, while we consider our approaches primarily in the context of federated learning, we also demonstrate that \qffl can be applied to other related problems such as meta-learning, helping to produce fair initializations across multiple tasks.

\section{Related Work}
\label{sec: relwork}

\textbf{Fairness in Resource Allocation.} Fair resource allocation has been extensively studied in fields such as network management~\citep{ee2004congestion,hahne1991round,kelly1998rate,neely2008fairness} and wireless communications~\citep{eryilmaz2006joint,nandagopal2000achieving,sanjabi2014optimal,shi2014fairness}. In these contexts, the problem is defined as allocating a scarce shared resource, e.g., communication time or power, among many users. 
In these cases, directly maximizing utilities such as total throughput may lead to unfair allocations where some users receive poor service. As a service provider, it is important to improve the quality of service for all users while maintaining overall throughput. For this reason, several popular fairness measurements have been proposed to balance between fairness and total throughput, including Jain's index~\citep{jain1984quantitative}, entropy~\citep{renyi1961measures}, max-min/min-max fairness~\citep{radunovic2007unified}, and proportional fairness~\citep{kelly1997charging}. A unified framework is captured through $\alpha$-fairness~\citep{lan2010axiomatic,mo2000fair}, in which the network manager can tune the emphasis on fairness by changing a single parameter, $\alpha$. 

To draw an analogy between federated learning and the problem of resource allocation, one can think of the global model as a resource that is meant to serve the users (or devices). In this sense, it is natural to ask similar questions about the fairness of the service that users receive and use similar tools to promote fairness. Despite this, we are unaware of any works that use $\alpha$-fairness from resource allocation to modify  objectives in machine learning. 
Inspired by the $\alpha$-fairness metric, we propose a similarly modified objective, $q$-Fair Federated Learning (\qffl), to encourage a more fair accuracy distribution across devices in the context of federated training. Similar to the $\alpha$-fairness metric, our \qffl objective is flexible enough to enable trade-offs between fairness and other traditional metrics such as accuracy by changing the parameter $q$. In Section~\ref{sec:eval}, we show empirically  that the use of \qffl as an objective in federated learning enables a more uniform accuracy distribution across devices---significantly reducing variance while maintaining the average accuracy.

\textbf{Fairness in Machine Learning.} Fairness is a broad topic that has received much  attention in the machine learning community, though the goals often differ from that described in this work. 
Indeed, \textit{fairness} in machine learning is typically defined as the protection of some specific attribute(s). Two common approaches are to preprocess the data to remove information about the protected attribute, or to post-process the model by adjusting the prediction threshold after classifiers are trained~\citep{feldman2015computational, hardt2016equality,calmon2017optimized}. Another set of works optimize an objective subject to some fairness constraints during training time~\citep{agarwal2018reductions, cotter2018optimization, hashimoto2018fairness, woodworth2017learning,
baharlouei2019r,
zafar2017fairness, zafar2015fairness, dwork2012fairness}. 
Our work also enforces fairness during training, although we define fairness as the uniformity of the accuracy distribution across devices in federated learning (Section~\ref{sec: ffl}), as opposed to the protection of a specific attribute. 
Although some works define accuracy parity to enforce equal error rates among specific groups as a notion of fairness~\citep{zafar2017fairness,cotter2018optimization}, devices in federated networks may not be partitioned by protected attributes, and our goal is not to optimize for  identical accuracy across all devices. 
\citet{cotter2018optimization} use a notion of `minimum accuracy', which is conceptually similar to our goal. However, it requires one optimization constraint for each device, which would result in hundreds to millions of constraints in  federated networks.

In federated settings,~\citet{mohri2019agnostic} recently proposed a minimax optimization scheme, Agnostic Federated Learning (\afl), which  optimizes for the performance of the single worst device. This method has only been applied at small scales (for a handful of devices). {Compared to \afl}, our proposed objective is more flexible as it can be tuned based on the desired amount of fairness; \afl can in fact be seen as a special case of our objective, \qffl, with large enough $q$. In Section~\ref{sec:eval}, we  demonstrate that the flexibility of our objective results in more favorable accuracy vs. fairness trade-offs than \afl, and that \qffl can also be solved at scale more efficiently.  

\textbf{Federated Optimization.} 
Federated learning faces challenges such as expensive communication, variability in systems environments in terms of  hardware or network connection, and non-identically distributed data across devices~\citep{li2019federated}. 
In order to reduce communication and tolerate heterogeneity, optimization methods must be developed to allow for local updating and low participation among devices~\citep{mcmahan2016communication, smith2017federated}. We incorporate these key ingredients when designing methods to solve our \qffl objective efficiently in the federated setting (Section~\ref{sec: method}). 

\section{Fair Federated Learning}
\label{sec: ffl}
In this section, we first formally define the classical federated learning objective and methods, and introduce our proposed notion of fairness (Section \ref{sec:prelim}). We then introduce \qffl, a novel objective that encourages a more fair (uniform) accuracy distribution across all devices (Section \ref{sec: obj}). Finally, in Section \ref{sec: method}, we describe \qfflavg, an efficient distributed method to solve the \qffl objective in federated settings.

\subsection{Preliminaries: Federated Learning, \fedavg, and fairness}
\label{sec:prelim}

Federated learning algorithms involve hundreds to millions of remote devices learning locally on their device-generated data and communicating with a central server periodically to reach a global consensus.
In particular, the goal is typically to solve:
\begin{align}
    \min_w f(w) = \sum_{k=1}^m p_k F_k(w) \, , \label{eq: original_obj}
\end{align}
where $m$ is the total number of devices, $p_k \geq 0$, and $\sum_k p_k=1$. 
The local objective $F_k$'s can be defined by empirical risks over local data, i.e., $F_k(w) = \frac{1}{n_k}\sum_{j_k=1}^{n_k} l_{j_k}(w)$, where $n_k$ is the number of samples available locally. We can set $p_k$ to be $\frac{n_k}{n}$, where $n=\sum_k n_k$ is the total number of samples to fit a traditional empirical risk minimization-type objective over the entire dataset.

Most prior work solves (\ref{eq: original_obj}) by sampling a subset of devices with probabilities $p_k$ at each round, and then running an optimizer such as stochastic gradient descent (SGD) for a variable number of iterations locally on each device. These \textit{local updating methods} enable flexible and efficient communication compared to traditional mini-batch methods, which would simply calculate a subset of the gradients~\citep{local_SGD_stich_18, cooperative_SGD_Joshi_18,parallel_SGD_Srebro_18,yu2018parallel}. \fedavg~\citep{mcmahan2016communication}, summarized in Algorithm \ref{alg: fedavg} in Appendix~\ref{app:fedavg}, is one of the leading methods to solve (\ref{eq: original_obj}) in non-convex settings. The method runs simply by having each selected device apply $E$ epochs of  SGD locally and then averaging the resulting local models. 

Unfortunately, solving problem (\ref{eq: original_obj}) in this manner can implicitly introduce highly variable performance between different devices. For instance, the learned model may be biased towards devices with larger numbers of data points, or (if weighting devices equally), to commonly occurring devices. More formally, we define our desired fairness criteria for federated learning below.

\vspace{.25em}
\begin{definition}[\textit{Fairness of performance distribution}]\label{def: fairness}

For trained models $w$ and $\tilde{w}$, we {informally} say that model $w$ provides a more \textit{fair} solution to the federated learning objective (\ref{eq: original_obj}) than model $\tilde{w}$ if the performance of model $w$ on the $m$ devices, $\{a_1, \dots a_m\}$, is more \emph{uniform} than the performance of model $\tilde{w}$ on the $m$ devices.
\end{definition}

In this work, we take `performance', $a_k$, to be the \textit{testing accuracy} of applying the trained model $w$ on the test data for device $k$. There are many ways to mathematically evaluate the \emph{uniformity} of the performance. In this work, we mainly use the \emph{variance} of the performance distribution as a measure of uniformity. However, we also explore other uniformity metrics, both empirically and theoretically, in Appendix~\ref{app:theory:variance}. 
We note that a tension exists between the \textit{fairness/uniformity} of the final testing accuracy and the \textit{average} testing accuracy across devices. In general, our goal is to impose more \emph{fairness/uniformity} while maintaining the same (or similar) average accuracy.
\begin{remark}[Connections to other fairness definitions]
 Definition \ref{def: fairness} targets device-level fairness, which has finer granularity than the classical attribute-level fairness such as accuracy parity~\citep{zafar2017fairness}.  We note that in certain cases where devices can be naturally clustered into groups with specific attributes, our definition can be seen as a relaxed version of accuracy parity, in that we optimize for similar but not necessarily identical performance across devices.
\end{remark}

\subsection{The objective: $q$-Fair Federated Learning (\qffl)}
\label{sec: obj}
A natural idea to achieve fairness as defined in (\ref{def: fairness}) would be to \textit{reweight} the objective---assigning higher weights to devices with poor performance, so that the distribution of accuracies in the network shifts towards more uniformity.  Note that this reweighting must be done dynamically, as the performance of the devices depends on the model being trained, which cannot be evaluated a priori.
Drawing inspiration from $\alpha$-fairness, a utility function used in fair resource allocation in wireless networks, we propose the following objective. 
For given local non-negative cost functions $F_k$ and parameter $q>0$, we define the  $q$-Fair Federated Learning (\qffl) objective as:
\begin{align}
\label{eq: qffl}
    \min_w~ f_q(w) = \sum_{k=1}^m \frac{p_k}{q+1}F_k^{q+1}(w) \, ,
\end{align}
where $F_k^{q+1}(\cdot)$ denotes $F_k(\cdot)$ to the power of $(q\!+\!1)$. 
Here, $q$ is a parameter that tunes the amount of fairness we wish to impose. Setting $q=0$ does not encourage fairness beyond the classical federated learning objective (\ref{eq: original_obj}). A larger $q$ means that we emphasize devices with higher local empirical losses, $F_k(w)$, thus imposing more uniformity to the training accuracy distribution and potentially inducing fairness in accordance with Definition~\ref{def: fairness}. Setting $f_q(w)$ with a large enough $q$ reduces to classical minimax fairness~\citep{mohri2019agnostic}, as the device with the worst performance (largest loss) will dominate the objective.  We note that while the $(q\!+\!1)$ term in the denominator in (\ref{eq: qffl}) may be absorbed in $p_k$, we include it as  it is standard in the $\alpha$-fairness literature and helps to ease notation. For completeness, we provide additional background on $\alpha$-fairness in Appendix~\ref{appen: alphafair}.

As mentioned previously, \qffl generalizes prior work in fair federated learning (\afl)~\citep{mohri2019agnostic},  allowing for a flexible trade-off between fairness and accuracy as parameterized by $q$. In our theoretical analysis (Appendix \ref{app:theory}), we  provide generalization bounds of \qffl that generalize the learning bounds of the \afl objective. Moreover, based on our fairness definition (Definition \ref{def: fairness}), we theoretically explore how \qffl  results in  more \emph{uniform} accuracy distributions with increasing $q$. 
Our results suggest that \qffl is able to impose `uniformity' of the test accuracy distribution in terms of various metrics such as variance and other geometric and information-theoretic measures. 

In our experiments (Section \ref{sec: effectivenss}), on both convex and non-convex models, we show that using the \qffl objective, we can obtain fairer/more uniform solutions for federated datasets in terms of both the training and testing accuracy distributions.

\subsection{The solver: FedAvg-style $q$-Fair Federated Learning (\qfflavg)}
\label{sec: method}

In developing a functional approach for fair federated learning, it is critical to consider not only what objective to solve but also how to solve such an objective efficiently in a massive distributed network.
In this section, we provide methods to solve \qffl. We start with a simpler method, \qfflsgd, to illustrate our main techniques.   
We then provide a more efficient counterpart, \qfflavg, by 
considering local updating schemes. Our proposed methods closely mirror traditional distributed optimization methods---mini-batch SGD and federated averaging (\fedavg)---but with step-sizes and subproblems carefully chosen in accordance with the \qffl problem (\ref{eq: qffl}).

\textbf{Achieving variable levels of fairness: tuning $q$.} In devising a method to solve \qffl (\ref{eq: qffl}), we begin by noting that it is crucial to first determine how to set $q$.
In practice, $q$ can be tuned based on the desired amount of fairness (with larger $q$ inducing more fairness). As we describe in our experiments (Section~\ref{sec: effectivenss}), it is therefore common to train a \textit{family of objectives} for different $q$ values so that a practitioner can explore the trade-off between accuracy and fairness for the application at hand.  

One concern with solving such a family of objectives is that
it requires step-size tuning for every value of $q$. In particular, 
in gradient-based methods, the step-size inversely depends on the Lipschitz constant of the function's gradient, which will change as we change $q$. 
This can quickly cause the search space to explode. To overcome this issue, we propose estimating the local Lipschitz constant for the family of \qffl objectives by using the Lipschitz constant we infer {by tuning the step-size (via grid search)} on just one $q$ (e.g., $q=0$). This allows us to dynamically adjust the step-size of our gradient-based optimization method for the \qffl objective, avoiding manual tuning for each $q$. In Lemma~\ref{lem: lipschitz} below we formalize the relation between the Lipschitz constant, $L$, for $q=0$ and $q>0$.

\begin{lemma}
\label{lem: lipschitz}
If the non-negative function $f(\cdot)$ has a Lipschitz gradient with constant $L$, then for any $q\geq0$ and at any point $w$,
\begin{align}
    L_q(w) = L f(w)^q + q f(w)^{q-1}\|\nabla f(w)\|^2
\end{align}
is an upper-bound for the local Lipschitz constant of the gradient of $\frac{1}{q+1}f^{q+1}(\cdot)$ at point $w$. 
\end{lemma}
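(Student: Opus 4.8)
The plan is to read the ``local Lipschitz constant of the gradient at $w$'' as the spectral norm of the Hessian of $g(\cdot) := \tfrac{1}{q+1} f^{q+1}(\cdot)$ evaluated at $w$, since for a twice-differentiable map the operator norm of the Hessian controls the rate at which the gradient can change in a neighborhood of $w$. Non-negativity of $f$ enters only to make $f^{q+1}$ and $f^{q}$ well-defined for non-integer exponents, so it is a standing requirement for the objective to make sense rather than a step in the estimate. With this reading, the whole argument reduces to differentiating $g$ twice and bounding the resulting matrix.

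First I would compute the gradient by the chain rule, $\nabla g(w) = f(w)^{q}\,\nabla f(w)$. Differentiating once more, applying the product rule together with the identity $\nabla\big(f^{q}\big) = q f^{q-1}\nabla f$ on the scalar factor, gives the key identity
\begin{align}
    \nabla^2 g(w) = q\, f(w)^{q-1}\, \nabla f(w)\,\nabla f(w)^\top + f(w)^{q}\, \nabla^2 f(w), \nonumber
\end{align}
after which everything is estimation. Taking spectral norms and applying the triangle inequality yields $\|\nabla^2 g(w)\| \le q\, f(w)^{q-1}\,\|\nabla f(w)\,\nabla f(w)^\top\| + f(w)^{q}\,\|\nabla^2 f(w)\|$. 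Two elementary facts then close the bound: the first summand is a rank-one outer product, whose spectral norm is exactly $\|\nabla f(w)\|^2$; and the hypothesis that $f$ has an $L$-Lipschitz gradient is equivalent, for $C^2$ functions, to $\|\nabla^2 f(w)\| \le L$ pointwise. Substituting both gives $\|\nabla^2 g(w)\| \le L f(w)^{q} + q f(w)^{q-1}\|\nabla f(w)\|^2 = L_q(w)$, which is precisely the claim.

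The main obstacle is not the calculation but making ``local Lipschitz constant'' rigorous under the stated hypotheses. Two points deserve care. If $f$ is only assumed to have an $L$-Lipschitz gradient rather than to be $C^2$, then $\nabla^2 f$ exists merely almost everywhere, and I would instead argue directly from the definition, bounding $\|\nabla g(w') - \nabla g(w)\|$ for $w'$ near $w$ and recovering the same constant in the limit $w' \to w$. Second, when $q<1$ the factor $f(w)^{q-1}$ diverges wherever $f(w)=0$; here non-negativity rescues the bound, since $f(w)=0$ forces $w$ to be a global minimizer and hence $\nabla f(w)=0$, so the term $q f(w)^{q-1}\|\nabla f(w)\|^2$ should be interpreted as its limiting value $0$ and the inequality still holds. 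I would state these two conventions explicitly before carrying out the two-line differentiation above.
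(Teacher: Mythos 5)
Your proof is correct and follows essentially the same route as the paper: compute $\nabla^2\bigl(\tfrac{1}{q+1}f^{q+1}\bigr) = q f^{q-1}\,\nabla f\,\nabla f^{\top} + f^{q}\,\nabla^2 f$ and bound its spectral norm term by term using $\|\nabla f\,\nabla f^{\top}\| = \|\nabla f\|^2$ and $\|\nabla^2 f\| \le L$. The only difference is that you explicitly flag the non-$C^2$ and $f(w)=0$ edge cases, which the paper's two-line proof silently glosses over (and the paper even writes $\nabla^2 f(w)$ where it means the Hessian of the scaled power); your version is, if anything, the more careful one.
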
 
\begin{proof}
At any point $w$, we can compute the Hessian $\nabla^2 \left(\frac{1}{q+1}f^{q+1}(w)\right)$ as:
\begin{equation}
    \nabla^2 \left(\frac{1}{q+1}f^{q+1}(w) \right) = q f^{q-1}(w)\underbrace{\nabla f(w)\nabla^T f(w)}_{\preceq \|\nabla f(w)\|^2 \times I} + f^q(w) \underbrace{\nabla^2 f(w)}_{\preceq L \times I}.
\end{equation}
As a result, $\|\nabla^2 \frac{1}{q+1}f^{q+1}(w)\|_2\leq L_q(w) = L f(w)^q + q f(w)^{q-1}\|\nabla f(w)\|^2$.
\end{proof}

\textbf{A first approach: \qfflsgd.}  Our first fair federated learning method, \qfflsgd, is an extension of the well-known federated mini-batch SGD (\fedsgd) method~\citep{mcmahan2016communication}. 
\qfflsgd uses a dynamic step-size instead of the normal fixed step-size of \fedsgd. Based on Lemma~\ref{lem: lipschitz}, for each local device $k$, the upper-bound of the local Lipschitz constant is $L F_k(w)^q + q F_k(w)^{q-1}\|\nabla F_k(w)\|^2$. In each step of \qfflsgd,  $\nabla F_k$ and $F_k$ on each selected device $k$ are computed at the current iterate and communicated to the central node. This information is used to compute the step-sizes (weights) for combining the updates from each device. The details are summarized in Algorithm~\ref{alg: FAIRFEDSGD}. {Note that \qfflsgd is reduced to \fedsgd when $q=0$.} It is also important to note that to run \qfflsgd with different values of $q$, we only need to estimate $L$ once by tuning the step-size on $q=0$ and can then reuse it for all values of $q>0$. 

\setlength{\textfloatsep}{4pt}
\begin{algorithm}[h]
\setlength{\abovedisplayskip}{0pt}
\setlength{\belowdisplayskip}{0pt}
\setlength{\abovedisplayshortskip}{0pt}
\setlength{\belowdisplayshortskip}{0pt}
\caption{\qfflsgd}
\label{alg: FAIRFEDSGD}
\begin{algorithmic}[1]
    \STATE {\bf Input:}  $K$, $T$, $q$, $1/L$, $w^0$, $p_k$, $k=1,\cdots, m$
    \FOR  {$t=0, \cdots, T-1$}
        \STATE Server selects a subset $S_t$ of $K$ devices at random (each device $k$ is chosen with prob. $p_k$)
        \STATE Server sends $w^t$ to all selected devices
        \STATE Each selected device $k$ computes:
        \vspace{1mm}
            \begin{align*}
            &\Delta_k^{t} = F_k^{q}(w^t) \nabla F_k(w^t) \\
            & h_k^t = q F_k^{q-1}(w^t)\|\nabla F_k(w^t)\|^2 + L F_k^q(w^t)
            \end{align*}
            \vspace{-.5em}
        \STATE Each selected device $k$ sends $\Delta_k^{t}$ and $h_k^t$ back to the server
        \STATE Server updates $w^{t+1}$ as:
        \vspace{-.5em}
        \begin{align*}
            w^{t+1} = w^t - \frac{\sum_{k\in S_t}\Delta_k^{t}}{\sum_{k\in S_t}h_k^t}
        \end{align*}
    \ENDFOR
\end{algorithmic}
\end{algorithm}

\begin{algorithm}[h]
\setlength{\abovedisplayskip}{0pt}
\setlength{\belowdisplayskip}{0pt}
\setlength{\abovedisplayshortskip}{0pt}
\setlength{\belowdisplayshortskip}{0pt}
\caption{\qfflavg}
    \label{alg: ffl}
\begin{algorithmic}[1]
    \STATE {\bf Input:}  $K$, $E$, $T$, $q$, $1/L$, $\eta$, $w^0$, $p_k$, $k=1,\cdots, m$
    \FOR  {$t=0, \cdots, T-1$}
        \STATE Server selects a subset $S_t$ of $K$ devices at random (each device $k$ is chosen with prob. $p_k$) 
        \STATE Server sends $w^t$ to all selected devices
        \STATE Each selected device $k$ updates $w^t$ for $E$ epochs of SGD on $F_k$ with step-size $\eta$ to obtain $\bar{w}_{k}^{t+1}$ 
        \vspace{-4mm}
        \STATE Each selected device $k$ computes:
            \begin{align*}
            & \Delta w_k^t = L(w^t - \bar{w}_{k}^{t+1})  \\
            & \Delta_k^{t} = F_k^{q}(w^t)  \Delta w_k^t \\
            & h_k^t = {q} F_k^{q-1}(w^t)\|\Delta w_k^t\|^2 + L F_k^q(w^t)
            \end{align*}
        \STATE Each selected device $k$ sends $\Delta_k^{t}$ and $h_k^t$ back to the server
        \STATE Server updates $w^{t+1}$ as:
            \begin{align*}
                w^{t+1} = w^t - \frac{\sum_{k\in S_t}\Delta_k^{t}}{\sum_{k\in S_t}h_k^t}
            \end{align*}
    \ENDFOR
\end{algorithmic}
\end{algorithm}

\textbf{Improving communication-efficiency: \qfflavg.}
 In federated settings, communication-efficient schemes using \emph{local} stochastic solvers (such as \fedavg) 
 have been shown to significantly improve convergence speed~\citep{mcmahan2016communication}.
{However, when $q>0$, the $F_k^{q+1}$ term is not an empirical average of the loss over all local samples due to the $q+1$ exponent, preventing the use of local SGD as in \fedavg. 
To address this, 
we propose to generalize \fedavg for $q>0$ using a more sophisticated dynamic weighted averaging scheme. The weights (step-sizes) are inferred from the upper bound of the local Lipschitz constants of the gradients of $F_k^{q+1}$, 
similar to \qfflsgd. To extend the local updating technique of \fedavg to the \qffl objective (\ref{eq: qffl}), we propose a heuristic where we replace the gradient $\nabla F_k$ in the \qfflsgd steps with the local updates that are obtained by running SGD locally on device $k$.} 
{Similarly, \qfflavg is reduced to \fedavg when $q=0$.} 
We provide additional details on \qfflavg in Algorithm~\ref{alg: ffl}. As we will see empirically, \qfflavg can solve \qffl objective much more efficiently than \qfflsgd due to the local updating heuristic. {Finally, recall that as $q \to \infty$ the \qffl objective recovers that of the \afl. However, we empirically notice that \qfflavg has a more favorable convergence speed compared to \afl while resulting in similar performance across devices (see Figure \ref{fig: compare_efficiency_agnostic} in the appendix).}

\section{Evaluation}
\label{sec:eval}
We now present empirical results of the proposed objective, \qffl, and proposed methods, \qfflavg and \qfflsgd.
We describe our experimental setup in Section \ref{sec: set-up}. We then demonstrate the improved fairness of \qffl in Section \ref{sec: effectivenss}, and compare \qffl with several baseline fairness objectives in Section \ref{sec: compare_baseline}. Finally, we show the efficiency of \qfflavg compared with \qfflsgd in Section \ref{sec: efficiency}. 
All code,
data, and experiments are publicly available at \href{https://github.com/litian96/fair_flearn}{\texttt{github.com/litian96/fair\_flearn}}.

\subsection{Experimental setup}
\label{sec: set-up}
\textbf{Federated datasets.} We explore a suite of federated datasets using both convex and non-convex models in our experiments. The datasets are curated from prior work in federated learning~\citep{mcmahan2016communication,smith2017federated, sahu2018convergence,mohri2019agnostic} as well as recent federated learning benchmarks~\citep{caldas2018leaf}. In particular, we study: (1) a synthetic dataset using a linear regression classifier, (2) a Vehicle dataset collected from a distributed sensor network~\citep{duarte2004vehicle} with a linear SVM for binary classification, (3) tweet data curated from Sentiment140~\citep{go2009twitter} (Sent140) with an LSTM classifier for text sentiment analysis, and (4) text data built from \emph{The Complete Works of William Shakespeare}~\citep{mcmahan2016communication} and an RNN to predict the next character. When comparing with \afl, we use the two small benchmark datasets (Fashion MNIST~\citep{xiao2017fashion} and Adult~\citep{adult}) studied in~\cite{mohri2019agnostic}. {When applying \qffl to meta-learning, we use the common meta-learning benchmark dataset Omniglot~\citep{lake2015human}.} Full dataset details are given in Appendix \ref{appen: data}.

\textbf{Implementation.} We implement all code in Tensorflow~\citep{abadi2016tensorflow}, simulating a federated network with one server and $m$ devices, where $m$ is the total number of devices in the dataset (Appendix \ref{appen: data}). We provide full details (including all hyperparameter values) in Appendix~\ref{app:implementation}.

\subsection{Fairness of \qffl}
\label{sec: effectivenss}
In our first experiments, we verify that the proposed objective \qffl leads to more fair solutions (Definition~\ref{def: fairness}) for federated data. In Figure \ref{fig: fairness}, we compare the final testing accuracy distributions of two objectives ($q=0$ and a tuned value of $q>0$) averaged across 5 random shuffles of each dataset. We observe that while the average testing accuracy remains fairly consistent, the objectives with $q>0$ result in more centered (i.e., fair) testing accuracy distributions with lower variance. In particular, \textit{while maintaining roughly the same average accuracy, \qffl reduces the variance of accuracies across all devices by 45\% on average.}
We further report the worst and best 10\% testing accuracies and the variance of the final accuracy distributions in Table \ref{table: fairness}. Comparing $q=0$ and $q>0$, we see that the average testing accuracy remains almost unchanged with the proposed objective despite significant reductions in variance. We report full results on all uniformity measurements (including variance) in Table~\ref{table: fairness_full} in the appendix, and show that \qffl encourages more uniform accuracies under other metrics as well.  We observe similar results on training accuracy distributions in Figure \ref{fig: fairness_train} and Table \ref{table: fairness_train}, Appendix \ref{appen: full_exp}. In Table~\ref{table: fairness}, the average accuracy is with respect to all data points, not all devices; however, we observe similar results with respect to devices, as shown in Table \ref{table: accuracy_device}, Appendix \ref{appen: full_exp}.  

\begin{figure}[h]
	\centering
	\includegraphics[width=\textwidth]{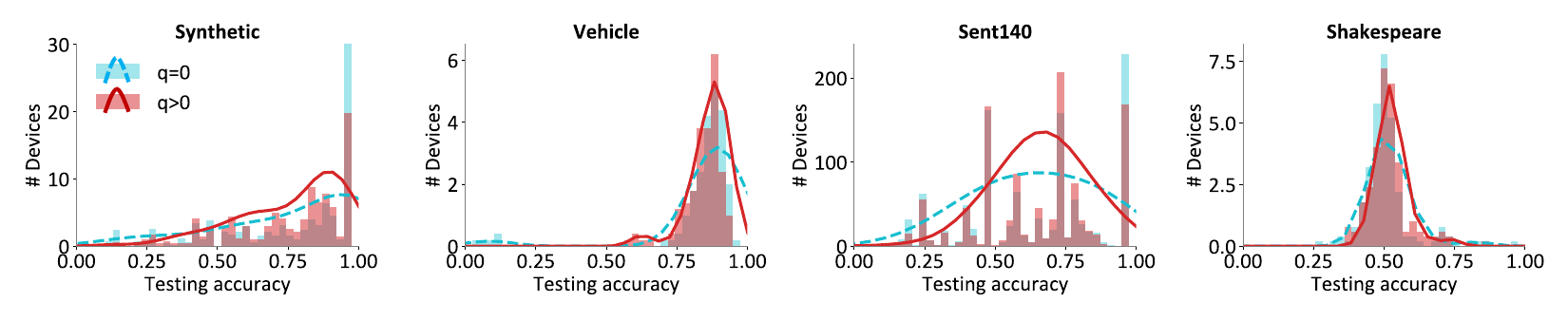}
	\caption{\qffl leads to fairer test accuracy distributions. While the average accuracy remains almost identical (see Table \ref{table: fairness}), by setting $q>0$, the distributions shift towards the center as low accuracies increase at the cost of potentially decreasing high accuracies on some devices.  Setting $q=0$ corresponds to the original objective (\ref{eq: original_obj}). The selected $q$ values for $q>0$ on the four datasets, as well as distribution statistics, are also shown in Table \ref{table: fairness}.}
	\label{fig: fairness}
\end{figure}

\begin{table}[h]
	\caption{Statistics of the test accuracy distribution for \qffl. By setting $q>0$, the accuracy of the worst 10\% devices is increased at the cost of possibly decreasing the accuracy of the best 10\% devices. While the average accuracy remains similar, the variance of the final accuracy distribution decreases significantly. We provide full results of other uniformity measurements (including variance) in Table \ref{table: fairness_full}, Appendix \ref{appen:full_exp_full_results}, and show that \qffl encourages more uniform distributions under all metrics.} 
	\centering
	\label{table: fairness}
	\begin{tabular}{ l | l | cccc} 
		\toprule
		\multirow{2}{*}{\textbf{Dataset}} & \multirow{2}{*}{\textbf{Objective}} & \textbf{Average} & \textbf{\small{Worst 10\%}} & \textbf{Best 10\%} & \textbf{Variance} \\
		& & (\%) & (\%) & (\%) &  \\
		\hline
		\multirow{2}{*}{Synthetic} & $q=0$ &  80.8 {\scriptsize $\pm$ .9} & 18.8 {\scriptsize $\pm$ 5.0} & 100.0 {\scriptsize $\pm$ 0.0} & 724 {\scriptsize $\pm$ 72} \\
		& $q=1$ &  79.0 {\scriptsize $\pm$ 1.2} & \textbf{31.1} {\scriptsize $\pm$ 1.8} & 100.0 {\scriptsize $\pm$ 0.0} & \textbf{472} {\scriptsize $\pm$ 14} \\
		\hline
		\multirow{2}{*}{Vehicle} & $q=0$ & 87.3 {\scriptsize $\pm$ .5} & 43.0 {\scriptsize $\pm$ 1.0} & \textbf{95.7} {\scriptsize $\pm$ 1.0} & 291 {\scriptsize $\pm$ 18} \\
		& $q=5$ & 87.7 {\scriptsize $\pm$ .7} & \textbf{69.9} {\scriptsize $\pm$ .6} & 94.0 {\scriptsize $\pm$ .9} & \textbf{48} {\scriptsize $\pm$ 5}\\
		\hline
		\multirow{2}{*}{Sent140} & $q=0$ & 65.1 {\scriptsize $\pm$ 4.8} & 15.9 {\scriptsize $\pm$ 4.9} & 100.0 {\scriptsize $\pm$ 0.0} & 697 {\scriptsize $\pm$ 132} \\
		& $q=1$ & 66.5 {\scriptsize $\pm$ .2} & \textbf{23.0} {\scriptsize $\pm$ 1.4}  & 100.0 {\scriptsize $\pm$ 0.0} & \textbf{509} {\scriptsize $\pm$ 30}  \\
		\hline
		\multirow{2}{*}{Shakespeare} & $q=0$ & 51.1 {\scriptsize $\pm$ .3} & 39.7 {\scriptsize $\pm$ 2.8} & \textbf{72.9} {\scriptsize $\pm$ 6.7} & 82 {\scriptsize $\pm$ 41}  \\
		& $q=.001$ & 52.1 {\scriptsize $\pm$ .3}  & \textbf{42.1} {\scriptsize $\pm$ 2.1} & 69.0 {\scriptsize $\pm$ 4.4} & \textbf{54} {\scriptsize $\pm$ 27} \\
		\bottomrule
	\end{tabular}
\end{table}

\textbf{Choosing $q$.} As discussed in Section~\ref{sec: method}, a natural question is to determine how $q$ should be tuned in the \qffl objective. Our framework is flexible in that it allows one to choose $q$ to tradeoff between fairness/uniformity and average accuracy. We empirically show that there are a family of $q$'s that can result in variable levels of fairness (and accuracy) on synthetic data in Table \ref{table: variable_q_synthetic}, Appendix \ref{appen: full_exp}. 
In general, this value can be tuned based on the data/application at hand and the desired amount of fairness. Another reasonable approach in practice would be to run Algorithm \ref{alg: ffl} with multiple $q$'s in parallel to obtain multiple final global models, and then select amongst these based on performance (e.g., accuracy) on the validation data. 
Rather than using just one optimal $q$ for all devices, for example, each device could pick a device-specific model based on their validation data. We show additional performance improvements with this device-specific strategy in Table \ref{table: multipleq} in Appendix \ref{appen: full_exp}. Finally, we note that one potential issue is that increasing the value of $q$ may slow the speed of convergence. However, for values of $q$ that result in more fair results on our datasets, we do not observe significant decrease in the convergence speed, as shown in Figure \ref{fig: convergence}, Appendix \ref{appen: full_exp}.

\subsection{Comparison with other objectives}
\label{sec: compare_baseline}
Next, we compare \qffl with other objectives that are likely to impose fairness in federated networks. One heuristic is to weight each data point equally, which reduces to the original objective in (\ref{eq: original_obj}) (i.e., \qffl with $q=0$) and has been investigated in Section \ref{sec: effectivenss}. We additionally compare with two alternatives: weighting \textit{devices equally} when sampling devices, and weighting \textit{devices adversarially}, namely, optimizing for the worst-performing device, as proposed in~\citet{mohri2019agnostic}.

\textbf{Weighting devices equally.} We compare \qffl with uniform sampling schemes and report testing accuracy in Figure \ref{fig: compare_uniform_test}. A table with the final accuracies and three fairness metrics is given in the appendix in Table~\ref{table: compare_uniform_test}. While the `weighting each device equally' heuristic tends to outperform our method in \emph{training} accuracy distributions (Figure \ref{fig: compare_uniform_training} and Table \ref{table: compare_uniform_train} in Appendix \ref{appen: full_exp}), we see that our method produces more fair solutions in terms of \emph{testing} accuracies. One explanation for this is that uniform sampling is a static method and can easily overfit to devices with very few data points, whereas 
{\qffl will put less weight on a device once its loss becomes small, potentially providing better generalization performance due to its dynamic nature.}
    
\begin{figure}[h]
    \centering
    \includegraphics[width=\textwidth]{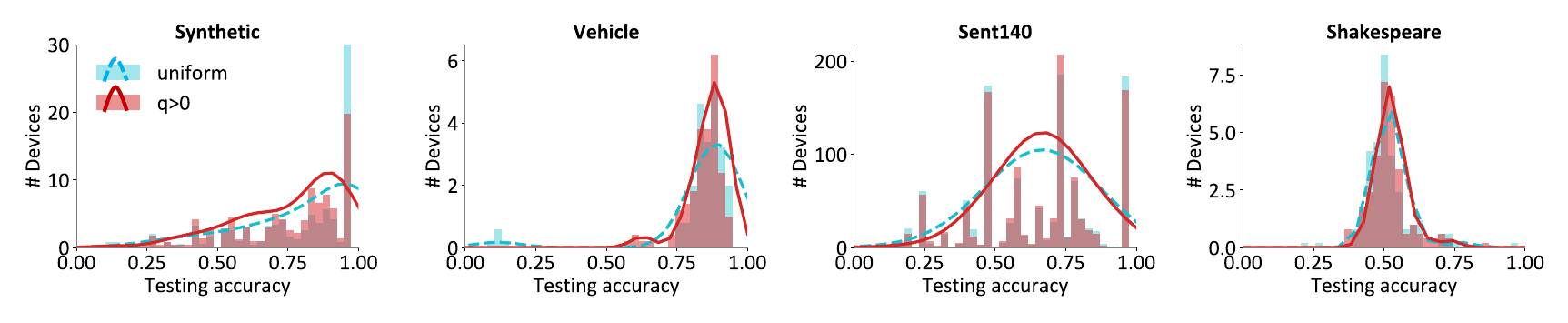}
    \caption{\qffl ($q>0$) compared with uniform sampling. In terms of testing accuracy, our objective produces more fair solutions than uniform sampling. Distribution statistics are provided in Table~\ref{table: compare_uniform_test} in the appendix. \qffl achieves similar average accuracies and more fair solutions.}
    \label{fig: compare_uniform_test}
    \vspace{0.1in}
\end{figure}

\textbf{Weighting devices adversarially.} We further compare with  \afl~\citep{mohri2019agnostic}, which is the only work we are aware of that aims to address fairness issues in federated learning. We implement a non-stochastic version of \afl where all devices are selected and updated each round, and perform grid search on the \afl hyperparameters, $\gamma_w$ and $\gamma_{\lambda}$. In order to devise a setup that is as favorable to \afl as possible, we modify Algorithm \ref{alg: ffl} by sampling all devices and letting each of them run gradient descent at each round.  We use the same small datasets (Adult~\citep{adult} and {subsampled Fashion MNIST~\citep{xiao2017fashion}) 
and the same logistic regression model} as in~\citet{mohri2019agnostic}. Full details of the implementation and hyperparameters (e.g., values of $q_1$ and $q_2$) are provided in Appendix~\ref{appen: hyperparameter}. We note that, as opposed to \afl, \qffl is flexible depending on the amount of fairness desired, with larger $q$ leading to more accuracy uniformity. 
As discussed, \qffl generalizes \afl in this regard, as \afl is equivalent to \qffl with a large enough $q$. 
In Table \ref{table: compare_agnostic}, we observe that \qffl can in fact achieve higher testing accuracy than \afl on the device with the worst performance (i.e., the problem that the \afl was designed to solve) with appropriate $q$. This also indicates that \qffl obtains the most fair solutions in certain cases. We also observe that \qffl converges faster in terms of communication rounds compared with \afl  to obtain similar performance (Appendix~\ref{appen: full_exp}), which we speculate is due to the non-smoothness of the \afl objective. 

\setlength{\tabcolsep}{3pt}
\begin{table}[h]
    \centering
	\caption{Our objective compared with weighing devices adversarially (\afl~\citep{mohri2019agnostic}). In order to be favorable to \afl, we use the two small, well-behaved datasets studied in~\citep{mohri2019agnostic}. \qffl $(q>0)$ outperforms \afl on the worst testing accuracy of both datasets. The tunable parameter $q$ controls how much fairness we would like to achieve---larger $q$ induces less variance. Each accuracy is averaged across 5 runs with different random initializations.}
	\label{table: compare_agnostic}
	\begin{tabular}{ l | ccc | cccc }
			\toprule
			 & \multicolumn{3}{c|}{\textbf{Adult}} & \multicolumn{4}{c}{\textbf{Fashion MNIST}}\\
			\hline  
			\multirow{2}{*}{Objectives} & average & {\fontfamily{qcr}\selectfont
			PhD}  & {\fontfamily{qcr}\selectfont non-PhD} & average & {\fontfamily{qcr}\selectfont shirt} & {\fontfamily{qcr}\selectfont pullover} & {\fontfamily{qcr}\selectfont T-shirt} \\
			& (\%) & (\%) & (\%) & (\%) & (\%) & (\%) & (\%) \\
			\hline
			\qffl, $q$=0  & 83.2 {\tiny $\pm$ .1} & 69.9 {\tiny $\pm$ .4} & \textbf{83.3} {\tiny $\pm$ .1} & 78.8 {\tiny $\pm$ .2} & 66.0 {\tiny $\pm$ .7} & 84.5 {\tiny $\pm$ .8} & \textbf{85.9} {\tiny $\pm$ .7} \\
			\afl & 82.5 {\tiny $\pm$ .5} & 73.0 {\tiny $\pm$ 2.2} & 82.6 {\tiny $\pm$ .5} & 77.8 {\tiny $\pm$ 1.2} & 71.4 {\tiny $\pm$ 4.2} & 81.0 {\tiny $\pm$ 3.6} & 82.1 {\tiny $\pm$ 3.9} \\ 
			\qffl, $q_1$>0 & 82.6 {\tiny $\pm$ .1} & \textbf{74.1} {\tiny $\pm$ .6} & 82.7 {\tiny $\pm$ .1} & 77.8 {\tiny $\pm$ .2} & \textbf{74.2} {\tiny $\pm$ .3} & 78.9 {\tiny $\pm$ .4} & 80.4 {\tiny $\pm$ .6} \\
			\qffl, $q_2$>$q_1$ & 82.3 {\tiny $\pm$ .1} & \textbf{74.4} {\tiny $\pm$ .9} & 82.4 {\tiny $\pm$ .1} &
			77.1 {\tiny $\pm$ .4} & \textbf{74.7} {\tiny $\pm$ .9} & 77.9 {\tiny $\pm$ .4} & 78.7 {\tiny $\pm$ .6} \\
			\bottomrule
	\end{tabular}
	\vspace{0.05in}
\end{table}

\subsection{Efficiency of the method \qfflavg}
\label{sec: efficiency} 

In this section, we show the efficiency of our proposed distributed solver, \qfflavg, by comparing Algorithm \ref{alg: ffl} with its non-local-updating baseline \qfflsgd (Algorithm \ref{alg: FAIRFEDSGD}) to solve the same objective (same $q>0$ as in Table \ref{table: fairness}). At each communication round, we have each method perform the same amount of computation, with \qfflavg running one epoch of local updates on each selected device while \qfflsgd runs gradient descent with the local training data. In Figure~\ref{fig: efficiency}, \qfflavg 
converges faster than \qfflsgd in terms of communication rounds in most cases due to its local updating scheme. The slower convergence of \qfflavg compared with \qfflsgd on the synthetic dataset may be due to the fact that when local data distributions are highly heterogeneous, local updating schemes may allow local models to move too far away from the initial global model, potentially hurting convergence; see Figure \ref{fig: heterogeneity} in Appendix \ref{appen: full_exp} for more details.

 \begin{figure}[h]
    \centering
    \includegraphics[width=0.99\textwidth]{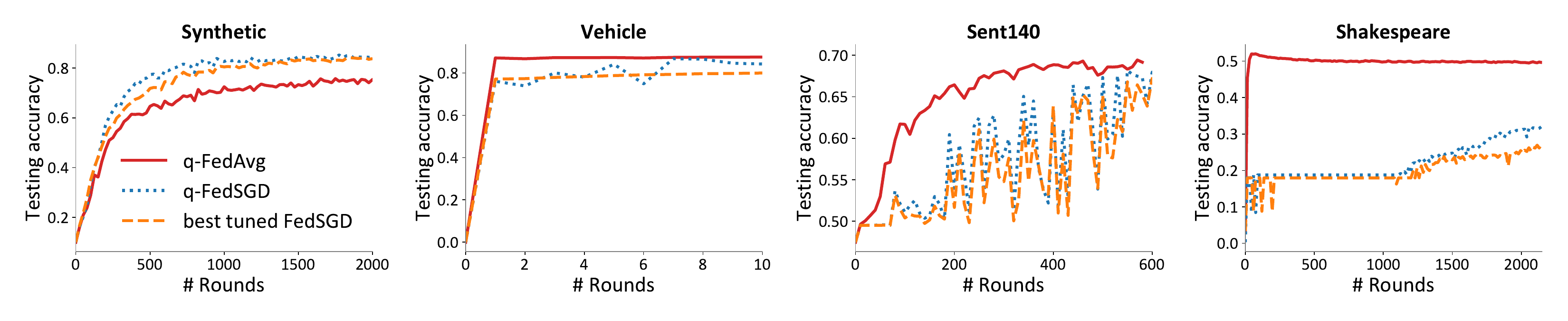}
    \caption{For a fixed objective (i.e., \qffl with the same $q$), the convergence of \qfflavg (Alg.\ref{alg: ffl}), \qfflsgd (Alg.\ref{alg: FAIRFEDSGD}), and \fedsgd. For \qfflavg and \qfflsgd, we tune a best step-size on $q = 0$ and apply that step-size to solve \qffl with $q > 0$. For \qfflsgd, we tune the step-size directly.
    We observe that (1) \qfflavg converges faster in terms of communication rounds; (2) our proposed \qfflsgd solver with a dynamic step-size achieves similar convergence behavior compared with a best-tuned \fedsgd.} 
    \label{fig: efficiency}
    \vspace{0.05in}
\end{figure}

To demonstrate the optimality of our dynamic step-size strategy in terms of solving \qffl, we also compare our solver \qfflsgd with \fedsgd with a best-tuned step-size. For \qfflsgd, we tune a step-size on $q=0$ and apply that step-size to solve \qffl with $q>0$. \qfflsgd has similar performance with \fedsgd, which indicates that (the inverse of) our estimated Lipschitz constant on $q>0$ is as good as a best tuned fixed step-size. We can reuse this estimation for different $q$'s instead of manually re-tuning it when $q$ changes.  We show the full results on other datasets in Appendix \ref{appen: full_exp}. 
We note that both proposed methods \qfflavg and \qfflsgd can be easily integrated into existing implementations of federated learning algorithms such as TensorFlow Federated~\citep{TFF}.

\subsection{Beyond federated learning: Applying \qffl to meta-learning}

{Finally, we generalize the proposed \qffl objective to other learning tasks beyond federated learning. One natural extension is to apply \qffl to meta-learning, where each task can be viewed as a device in federated networks. The goal of meta-learning is to learn a model initialization such that it can be quickly adapted to new tasks using limited training samples. However, as the new tasks can be heterogeneous, the performance distribution of the final personalized models may also be non-uniform. Therefore, we aim to learn a better initialization such that it can quickly solve unseen tasks in a fair manner, i.e., reduce the variance of the accuracy distribution of the personalized models. 

\vspace{.25em}
\begin{minipage}{\textwidth}
  \begin{minipage}[c]{0.3\textwidth}
    \centering
    \includegraphics[width=\textwidth]{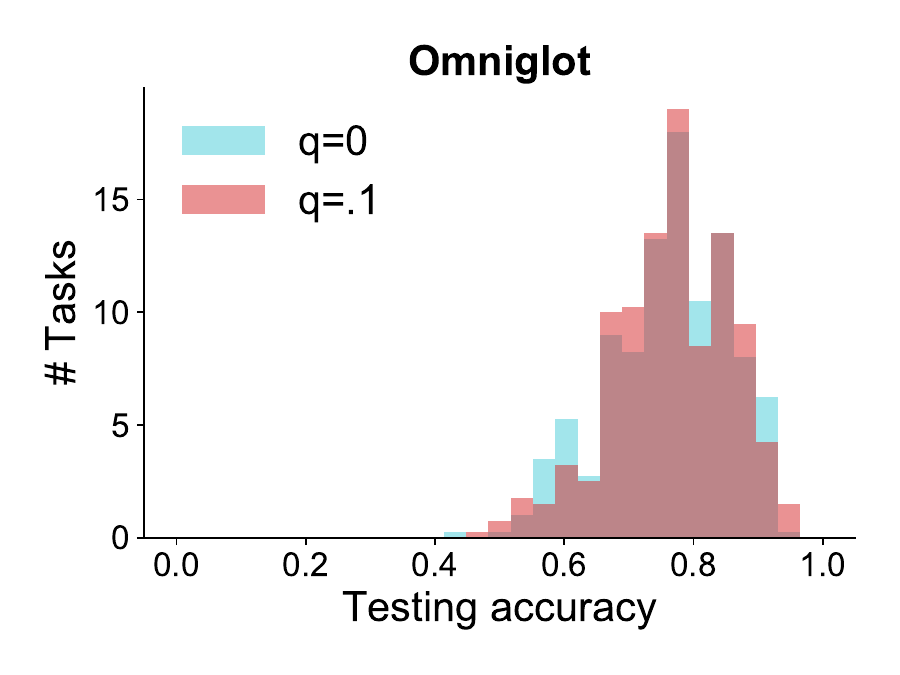}
    \captionof{figure}{{\qffl results in fairer (more centered) initializations for meta-learning tasks.}}
  \end{minipage}
  \hfill
  \begin{minipage}[c]{0.65\textwidth}
    \centering
    \captionof{table}{{Statistics of the accuracy distribution of personalized models using \qmaml. The method with $q=0$ corresponds to \maml. Similarly, we see that the variance is reduced while the accuracy of the worst 10\% devices is increased.}  }
    \begin{small}
    \begin{tabular}{ l | l | cccc}
    \toprule
    \multirow{2}{*}{\textbf{Dataset}} & \multirow{2}{*}{\textbf{Objective}} & \textbf{Average} & \textbf{\small{Worst 10\%}} & \textbf{Best 10\%} & \textbf{Variance} \\
    & & (\%) & (\%) & (\%) &  \\
    \hline
    \multirow{2}{*}{Omniglot} & $q=0$ & 79.1 {\scriptsize $\pm$ 9.8}  & 61.2 {\scriptsize $\pm$ 3.2} & 94.0 {\scriptsize $\pm$ .5} & 93 {\scriptsize $\pm$ 23} \\
    & $q=.1$ & 79.3 {\scriptsize $\pm$ 9.6}  & \textbf{62.5} {\scriptsize $\pm$ 5.3}  & 93.8 {\scriptsize $\pm$ .9} &  \textbf{86}  {\scriptsize $\pm$ 28} \\
    \bottomrule
    \end{tabular}
    \end{small}
    \end{minipage}
  \end{minipage}
 \vspace{.25em}

To achieve this goal, we propose a new method, \qmaml, by combining \qffl with the popular meta-learning method \maml~\citep{finn2017model}. In particular, instead of updating the global model in the way described in \maml, we update the global parameters using the gradients of the \qffl objective $\frac{1}{q+1}F_k^{q+1}(w)$, with weights inferred from Lemma~\ref{lem: lipschitz}. Similarly, \qmaml with $q=0$ reduces to \maml, and \qmaml with $q \to \infty$ corresponds to \maml with a most `fair' initialization and a potentially lower average accuracy. The detailed algorithm is summarized in Algorithm~\ref{alg: qmaml} in Appendix~\ref{app:qmaml}. We sample 10 tasks at each round during meta-training, and train for 5 iterations of (mini-batch) SGD for personalization on meta-testing tasks. We report test accuracy of personalized models on the meta-testing tasks.  
From Figure 5 and Table 3 above, we observe that \qmaml is able to learn initializations which result in fairer personalized models with lower variance.}

\section{Conclusion}
In this work, we propose \qffl, a novel optimization objective inspired by fair resource allocation in wireless networks that encourages fairer (more uniform) accuracy distributions across devices in federated learning. We devise a scalable method, \qfflavg, to solve this objective in massive networks. Our empirical evaluation on a suite of federated datasets demonstrates the resulting fairness and flexibility of \qffl, as well as the efficiency of \qfflavg compared with existing baselines. {We show that our framework is useful not only for federated learning tasks, but also for other learning paradigms such as meta-learning.}

\section*{Acknowledgments}
We thank Sebastian Caldas, Chen Dan, Neel Guha, Anit Kumar Sahu, Eric Tan, and Samuel Yeom for their helpful discussions and 
comments. The work of TL and VS was supported in part by the National Science Foundation grant IIS1838017, a Google
Faculty Award, a Carnegie Bosch Institute Research Award, and the CONIX Research Center. Any opinions,
findings, and conclusions or recommendations expressed in this material are those of the author(s) and do
not necessarily reflect the National Science Foundation or any other funding agency.

\bibliography{ref}

\begin{thebibliography}{49}
\providecommand{\natexlab}[1]{#1}
\providecommand{\url}[1]{\texttt{#1}}
\expandafter\ifx\csname urlstyle\endcsname\relax
  \providecommand{\doi}[1]{doi: #1}\else
  \providecommand{\doi}{doi: \begingroup \urlstyle{rm}\Url}\fi

\bibitem[TFF()]{TFF}
Tensorflow federated: Machine learning on decentralized data.
\newblock URL \url{https://www.tensorflow.org/federated}.

\bibitem[Abadi et~al.(2016)Abadi, Barham, Chen, Chen, Davis, Dean, Devin,
  Ghemawat, Irving, Isard, Kudlur, Levenberg, Monga, Moore, Murray, Steiner,
  Tucker, Vasudevan, Warden, Wicke, Yu, and Zheng]{abadi2016tensorflow}
Mart{\'\i}n Abadi, Paul Barham, Jianmin Chen, Zhifeng Chen, Andy Davis, Jeffrey
  Dean, Matthieu Devin, Sanjay Ghemawat, Geoffrey Irving, Michael Isard,
  Manjunath~Kudlur Kudlur, Josh Levenberg, Rajat Monga, Sherry Moore, Derek~G.
  Murray, Benoit Steiner, Paul Tucker, Vijay Vasudevan, Pete Warden, Martin
  Wicke, Yuan Yu, and Xiaoqiang Zheng.
\newblock Tensorflow: A system for large-scale machine learning.
\newblock In \emph{Operating Systems Design and Implementation}, 2016.

\bibitem[Agarwal et~al.(2018)Agarwal, Beygelzimer, Dudik, Langford, and
  Wallach]{agarwal2018reductions}
Alekh Agarwal, Alina Beygelzimer, Miroslav Dudik, John Langford, and Hanna
  Wallach.
\newblock A reductions approach to fair classification.
\newblock In \emph{International Conference on Machine Learning}, 2018.

\bibitem[Baharlouei et~al.(2020)Baharlouei, Nouiehed, Beirami, and
  Razaviyayn]{baharlouei2019r}
Sina Baharlouei, Maher Nouiehed, Ahmad Beirami, and Meisam Razaviyayn.
\newblock R\'enyi fair inference.
\newblock In \emph{International Conference on Learning Representations}, 2020.

\bibitem[Beirami et~al.(2019)Beirami, Calderbank, Christiansen, Duffy, and
  M{\'e}dard]{beirami2018characterization}
Ahmad Beirami, Robert Calderbank, Mark~M Christiansen, Ken~R Duffy, and Muriel
  M{\'e}dard.
\newblock A characterization of guesswork on swiftly tilting curves.
\newblock \emph{IEEE Transactions on Information Theory}, 65\penalty0
  (5):\penalty0 2850--2871, 2019.

\bibitem[Blake(1998)]{adult}
Catherine~L Blake.
\newblock Uci repository of machine learning databases.
\newblock \emph{http://www.ics.uci.edu/˜mlearn/MLRepository}, 1998.

\bibitem[Bonawitz et~al.(2019)Bonawitz, Eichner, Grieskamp, Huba, Ingerman,
  Ivanov, Kiddon, Konecny, Mazzocchi, McMahan, Overveldt, Petrou, Ramage, and
  Roselande]{bonawitz2019towards}
Keith Bonawitz, Hubert Eichner, Wolfgang Grieskamp, Dzmitry Huba, Alex
  Ingerman, Vladimir Ivanov, Chloe Kiddon, Jakub Konecny, Stefano Mazzocchi,
  H~Brendan McMahan, Timon~Van Overveldt, David Petrou, Daniel Ramage, and
  Jason Roselande.
\newblock Towards federated learning at scale: System design.
\newblock In \emph{Conference on Machine Learning and Systems}, 2019.

\bibitem[Caldas et~al.(2018)Caldas, Wu, Li, Kone{\v{c}}n{\`y}, McMahan, Smith,
  and Talwalkar]{caldas2018leaf}
Sebastian Caldas, Peter Wu, Tian Li, Jakub Kone{\v{c}}n{\`y}, H~Brendan
  McMahan, Virginia Smith, and Ameet Talwalkar.
\newblock Leaf: A benchmark for federated settings.
\newblock \emph{arXiv preprint arXiv:1812.01097}, 2018.

\bibitem[Calmon et~al.(2017)Calmon, Wei, Vinzamuri, Ramamurthy, and
  Varshney]{calmon2017optimized}
Flavio Calmon, Dennis Wei, Bhanukiran Vinzamuri, Karthikeyan~Natesan
  Ramamurthy, and Kush~R Varshney.
\newblock Optimized pre-processing for discrimination prevention.
\newblock In \emph{Advances in Neural Information Processing Systems}, 2017.

\bibitem[Cotter et~al.(2019)Cotter, Jiang, Wang, Narayan, Gupta, You, and
  Sridharan]{cotter2018optimization}
Andrew Cotter, Heinrich Jiang, Serena Wang, Taman Narayan, Maya Gupta, Seungil
  You, and Karthik Sridharan.
\newblock Optimization with non-differentiable constraints with applications to
  fairness, recall, churn, and other goals.
\newblock \emph{Journal of Machine Learning Research}, 2019.

\bibitem[Dashti et~al.(2013)Dashti, Azmi, and Navaie]{dashti2013harmonic}
Mina Dashti, Paeiz Azmi, and Keivan Navaie.
\newblock Harmonic mean rate fairness for cognitive radio networks with
  heterogeneous traffic.
\newblock \emph{Transactions on Emerging Telecommunications Technologies},
  2013.

\bibitem[Duarte \& Hu(2004)Duarte and Hu]{duarte2004vehicle}
Marco~F Duarte and Yu~Hen Hu.
\newblock Vehicle classification in distributed sensor networks.
\newblock \emph{Journal of Parallel and Distributed Computing}, 2004.

\bibitem[Dwork et~al.(2012)Dwork, Hardt, Pitassi, Reingold, and
  Zemel]{dwork2012fairness}
Cynthia Dwork, Moritz Hardt, Toniann Pitassi, Omer Reingold, and Richard Zemel.
\newblock Fairness through awareness.
\newblock In \emph{Innovations in Theoretical Computer Science}, 2012.

\bibitem[Ee \& Bajcsy(2004)Ee and Bajcsy]{ee2004congestion}
Cheng~Tien Ee and Ruzena Bajcsy.
\newblock Congestion control and fairness for many-to-one routing in sensor
  networks.
\newblock In \emph{International Conference on Embedded Networked Sensor
  Systems}, 2004.

\bibitem[Eryilmaz \& Srikant(2006)Eryilmaz and Srikant]{eryilmaz2006joint}
Atilla Eryilmaz and R~Srikant.
\newblock Joint congestion control, routing, and mac for stability and fairness
  in wireless networks.
\newblock \emph{IEEE Journal on Selected Areas in Communications}, 2006.

\bibitem[Feldman(2015)]{feldman2015computational}
Michael Feldman.
\newblock Computational fairness: Preventing machine-learned discrimination.
\newblock 2015.

\bibitem[Finn et~al.(2017)Finn, Abbeel, and Levine]{finn2017model}
Chelsea Finn, Pieter Abbeel, and Sergey Levine.
\newblock Model-agnostic meta-learning for fast adaptation of deep networks.
\newblock In \emph{International Conference on Machine Learning}, 2017.

\bibitem[Go et~al.(2009)Go, Bhayani, and Huang]{go2009twitter}
Alec Go, Richa Bhayani, and Lei Huang.
\newblock Twitter sentiment classification using distant supervision.
\newblock \emph{Stanford CS224N Project Report}, 2009.

\bibitem[Hahne(1991)]{hahne1991round}
Ellen~L. Hahne.
\newblock Round-robin scheduling for max-min fairness in data networks.
\newblock \emph{IEEE Journal on Selected Areas in communications}, 1991.

\bibitem[Hardt et~al.(2016)Hardt, Price, and Srebro]{hardt2016equality}
Moritz Hardt, Eric Price, and Nati Srebro.
\newblock Equality of opportunity in supervised learning.
\newblock In \emph{Advances in Neural Information Processing Systems}, 2016.

\bibitem[Hashimoto et~al.(2018)Hashimoto, Srivastava, Namkoong, and
  Liang]{hashimoto2018fairness}
Tatsunori Hashimoto, Megha Srivastava, Hongseok Namkoong, and Percy Liang.
\newblock Fairness without demographics in repeated loss minimization.
\newblock In \emph{International Conference on Machine Learning}, 2018.

\bibitem[Jain et~al.(1984)Jain, Chiu, and Hawe]{jain1984quantitative}
Rajendra~K Jain, Dah-Ming~W Chiu, and William~R Hawe.
\newblock A quantitative measure of fairness and discrimination.
\newblock \emph{Eastern Research Laboratory, Digital Equipment Corporation},
  1984.

\bibitem[Kelly(1997)]{kelly1997charging}
Frank Kelly.
\newblock Charging and rate control for elastic traffic.
\newblock \emph{European Transactions on Telecommunications}, 1997.

\bibitem[Kelly et~al.(1998)Kelly, Maulloo, and Tan]{kelly1998rate}
Frank~P Kelly, Aman~K Maulloo, and David~KH Tan.
\newblock Rate control for communication networks: shadow prices, proportional
  fairness and stability.
\newblock \emph{Journal of the Operational Research society}, 1998.

\bibitem[Lake et~al.(2015)Lake, Salakhutdinov, and Tenenbaum]{lake2015human}
Brenden~M Lake, Ruslan Salakhutdinov, and Joshua~B Tenenbaum.
\newblock Human-level concept learning through probabilistic program induction.
\newblock \emph{Science}, 2015.

\bibitem[Lan et~al.(2010)Lan, Kao, Chiang, and Sabharwal]{lan2010axiomatic}
Tian Lan, David Kao, Mung Chiang, and Ashutosh Sabharwal.
\newblock An axiomatic theory of fairness in network resource allocation.
\newblock In \emph{Conference on Information Communications}, 2010.

\bibitem[Li et~al.(2019)Li, Sahu, Talwalkar, and Smith]{li2019federated}
Tian Li, Anit~Kumar Sahu, Ameet Talwalkar, and Virginia Smith.
\newblock Federated learning: Challenges, methods, and future directions.
\newblock \emph{arXiv preprint arXiv:1908.07873}, 2019.

\bibitem[Li et~al.(2020)Li, Sahu, Zaheer, Sanjabi, Talwalkar, and
  Smith]{sahu2018convergence}
Tian Li, Anit~Kumar Sahu, Manzil Zaheer, Maziar Sanjabi, Ameet Talwalkar, and
  Virginia Smith.
\newblock Federated optimization in heterogeneous networks.
\newblock In \emph{Conference on Machine Learning and Systems}, 2020.

\bibitem[McMahan et~al.(2017)McMahan, Moore, Ramage, Hampson, and
  Arcas]{mcmahan2016communication}
H~Brendan McMahan, Eider Moore, Daniel Ramage, Seth Hampson, and Blaise
  Ag{\"u}era~y Arcas.
\newblock Communication-efficient learning of deep networks from decentralized
  data.
\newblock In \emph{International Conference on Artificial Intelligence and
  Statistics}, 2017.

\bibitem[Mo \& Walrand(2000)Mo and Walrand]{mo2000fair}
Jeonghoon Mo and Jean Walrand.
\newblock Fair end-to-end window-based congestion control.
\newblock \emph{IEEE/ACM Transactions on Networking}, 2000.

\bibitem[Mohri et~al.(2019)Mohri, Sivek, and Suresh]{mohri2019agnostic}
Mehryar Mohri, Gary Sivek, and Ananda~Theertha Suresh.
\newblock Agnostic federated learning.
\newblock In \emph{International Conference on Machine Learning}, 2019.

\bibitem[Nandagopal et~al.(2000)Nandagopal, Kim, Gao, and
  Bharghavan]{nandagopal2000achieving}
Thyagarajan Nandagopal, Tae-Eun Kim, Xia Gao, and Vaduvur Bharghavan.
\newblock Achieving mac layer fairness in wireless packet networks.
\newblock In \emph{International Conference on Mobile Computing and
  Networking}, 2000.

\bibitem[Neely et~al.(2008)Neely, Modiano, and Li]{neely2008fairness}
Michael~J Neely, Eytan Modiano, and Chih-Ping Li.
\newblock Fairness and optimal stochastic control for heterogeneous networks.
\newblock \emph{IEEE/ACM Transactions On Networking}, 2008.

\bibitem[Pennington et~al.(2014)Pennington, Socher, and
  Manning]{pennington2014glove}
Jeffrey Pennington, Richard Socher, and Christopher Manning.
\newblock Glove: Global vectors for word representation.
\newblock In \emph{Empirical Methods in Natural Language Processing}, 2014.

\bibitem[Radunovic \& Le~Boudec(2007)Radunovic and
  Le~Boudec]{radunovic2007unified}
Bozidar Radunovic and Jean-Yves Le~Boudec.
\newblock A unified framework for max-min and min-max fairness with
  applications.
\newblock \emph{IEEE/ACM Transactions on Networking}, 2007.

\bibitem[R{\'e}nyi et~al.(1961)]{renyi1961measures}
Alfr{\'e}d R{\'e}nyi et~al.
\newblock On measures of entropy and information.
\newblock In \emph{Proceedings of the Fourth Berkeley Symposium on Mathematical
  Statistics and Probability}, 1961.

\bibitem[Sanjabi et~al.(2014)Sanjabi, Razaviyayn, and Luo]{sanjabi2014optimal}
Maziar Sanjabi, Meisam Razaviyayn, and Zhi-Quan Luo.
\newblock Optimal joint base station assignment and beamforming for
  heterogeneous networks.
\newblock \emph{IEEE Transactions on Signal Processing}, 2014.

\bibitem[Shamir et~al.(2014)Shamir, Srebro, and Zhang]{shamir2014communication}
Ohad Shamir, Nati Srebro, and Tong Zhang.
\newblock Communication-efficient distributed optimization using an approximate
  newton-type method.
\newblock In \emph{International Conference on Machine Learning}, 2014.

\bibitem[Shi et~al.(2014)Shi, Prasad, Onur, and Niemegeers]{shi2014fairness}
Huaizhou Shi, R~Venkatesha Prasad, Ertan Onur, and IGMM Niemegeers.
\newblock Fairness in wireless networks: Issues, measures and challenges.
\newblock \emph{IEEE Communications Surveys and Tutorials}, 2014.

\bibitem[Smith et~al.(2017)Smith, Chiang, Sanjabi, and
  Talwalkar]{smith2017federated}
Virginia Smith, Chao-Kai Chiang, Maziar Sanjabi, and Ameet~S Talwalkar.
\newblock Federated multi-task learning.
\newblock In \emph{Advances in Neural Information Processing Systems}, 2017.

\bibitem[Smith et~al.(2018)Smith, Forte, Chenxin, Tak{\'a}{\v{c}}, Jordan, and
  Jaggi]{smith2018cocoa}
Virginia Smith, Simone Forte, Ma~Chenxin, Martin Tak{\'a}{\v{c}}, Michael~I
  Jordan, and Martin Jaggi.
\newblock Cocoa: A general framework for communication-efficient distributed
  optimization.
\newblock \emph{Journal of Machine Learning Research}, 2018.

\bibitem[Stich(2019)]{local_SGD_stich_18}
Sebastian~U Stich.
\newblock Local sgd converges fast and communicates little.
\newblock In \emph{International Conference on Learning Representations}, 2019.

\bibitem[Wang \& Joshi(2018)Wang and Joshi]{cooperative_SGD_Joshi_18}
Jianyu Wang and Gauri Joshi.
\newblock Cooperative sgd: A unified framework for the design and analysis of
  communication-efficient sgd algorithms.
\newblock \emph{arXiv preprint arXiv:1808.07576}, 2018.

\bibitem[Woodworth et~al.(2017)Woodworth, Gunasekar, Ohannessian, and
  Srebro]{woodworth2017learning}
Blake Woodworth, Suriya Gunasekar, Mesrob~I. Ohannessian, and Nathan Srebro.
\newblock Learning non-discriminatory predictors.
\newblock In \emph{Conference on Learning Theory}, 2017.

\bibitem[Woodworth et~al.(2018)Woodworth, Wang, Smith, McMahan, and
  Srebro]{parallel_SGD_Srebro_18}
Blake~E Woodworth, Jialei Wang, Adam Smith, Brendan McMahan, and Nati Srebro.
\newblock Graph oracle models, lower bounds, and gaps for parallel stochastic
  optimization.
\newblock In \emph{Advances in Neural Information Processing Systems}, 2018.

\bibitem[Xiao et~al.(2017)Xiao, Rasul, and Vollgraf]{xiao2017fashion}
Han Xiao, Kashif Rasul, and Roland Vollgraf.
\newblock Fashion-mnist: a novel image dataset for benchmarking machine
  learning algorithms.
\newblock \emph{arXiv preprint arXiv:1708.07747}, 2017.

\bibitem[Yu et~al.(2019)Yu, Yang, and Zhu]{yu2018parallel}
Hao Yu, Sen Yang, and Shenghuo Zhu.
\newblock Parallel restarted sgd for non-convex optimization with faster
  convergence and less communication.
\newblock In \emph{AAAI Conference on Artificial Intelligence}, 2019.

\bibitem[Zafar et~al.(2017{\natexlab{a}})Zafar, Valera, Gomez~Rodriguez, and
  Gummadi]{zafar2017fairness}
Muhammad~Bilal Zafar, Isabel Valera, Manuel Gomez~Rodriguez, and Krishna~P
  Gummadi.
\newblock Fairness beyond disparate treatment \& disparate impact: Learning
  classification without disparate mistreatment.
\newblock In \emph{International Conference on World Wide Web},
  2017{\natexlab{a}}.

\bibitem[Zafar et~al.(2017{\natexlab{b}})Zafar, Valera, Rogriguez, and
  Gummadi]{zafar2015fairness}
Muhammad~Bilal Zafar, Isabel Valera, Manuel~Gomez Rogriguez, and Krishna~P
  Gummadi.
\newblock Fairness constraints: Mechanisms for fair classification.
\newblock In \emph{International Conference on Artificial Intelligence and
  Statistics}, 2017{\natexlab{b}}.

\end{thebibliography}
\bibliographystyle{iclr2020_conference}

\newpage
\appendix

\section{Theoretical Analysis of the proposed objective \qffl} \label{app:theory}

\subsection{uniformity induced by \qffl}\label{app:theory:variance}

In this section, we theoretically justify that the \qffl objective can impose more uniformity of the performance/accuracy distribution.  As discussed in Section \ref{sec: obj}, \qffl can encourage more fair solutions in terms of several metrics, including (1) the variance of accuracy distribution (smaller variance), (2) the cosine similarity between the accuracy distribution and the all-ones vector $\bf{1}$ (larger similarity), and (3) the entropy of the accuracy distribution (larger entropy). We begin by formally defining these fairness notions.

\vspace{.25em}
\begin{definition}[\textit{Uniformity 1: Variance of the performance distribution}] \label{def:uni_1}
We say that the performance distribution of $m$ devices $\{F_1(w), \dots, F_m(w)\}$ is more uniform under solution $w$ than $w'$ if
\begin{align}
    \var(F_1(w), \dots, F_m(w)) <  \var(F_1(w'), \dots, F_m(w')).
\end{align}
\end{definition}

\vspace{.25em}
\begin{definition}[\textit{Uniformity 2: Cosine similarity between the performance distribution and $\bf{1}$}]  \label{def:uni_2}
We say that the performance distribution of $m$ devices $\{F_1(w), \dots, F_m(w)\}$ is more uniform under solution $w$ than $w'$ if the cosine similarity between $\{F_1(w), \dots, F_m(w)\}$ and $\bf{1}$ is larger than that between $\{F_1(w'), \dots, F_m(w')\}$ and $\bf{1}$, i.e., 
\begin{align}
    \frac{\frac{1}{m} \sum_{k=1}^m F_k(w)}{\sqrt{\frac{1}{m}\sum_{k=1}^m F^2_k(w)}} \geq \frac{\frac{1}{m} \sum_{k=1}^m F_k(w')}{\sqrt{\frac{1}{m} \sum_{k=1}^m F^2_k(w')}}.
\end{align}
\end{definition}

\vspace{.25em}
\begin{definition}[\textit{Uniformity 3: Entropy of performance distribution}]  \label{def:uni_3}
We say that the performance distribution of $m$ devices $\{F_1(w), \dots, F_m(w)\}$ is more uniform under solution $w$ than $w'$ if 
\begin{align}
    \widetilde{H}(F(w)) \geq \widetilde{H}(F(w')),
\end{align}
where $\widetilde{H}(F(w))$ is the entropy of the stochastic vector obtained by normalizing $\{F_1(w), \dots, F_m(w)\}$, defined as
\begin{align}
 \widetilde{H}(F(w)):=   -\sum_{k=1}^m \frac{F_k(w)}{\sum_{k=1}^m F_k(w)} \log \left(\frac{F_k(w)}{\sum_{k=1}^m F_k(w)}\right).
\label{eq:H-def}
\end{align}
\end{definition}

To enforce uniformity/fairness (defined in Definition~\ref{def:uni_1}, \ref{def:uni_2}, and \ref{def:uni_3}), we propose the \qffl objective to impose more weights on the devices with worse performance. Throughout the proof, for the ease of mathematical exposition, we consider a similar unweighted objective:
\begin{equation*}
\min_w \left\{f_q(w) = \left(\frac{1}{m}\sum_{k=1}^m F_k^{q+1}(w)\right)^{\frac{1}{q+1}}\right\},
\end{equation*}
and we denote $w_q^*$ as the global optimal solution of $\min_w ~f_q(w)$.

We first investigate the special case of $q=1$ and show that $q=1$ results in more fair solutions than $q=0$ based on Definition~\ref{def:uni_1} and Definition~\ref{def:uni_2}.

\vspace{.25em}
\begin{lemma} \label{lemma:uni_variance}
$q=1$ leads to a more fair solution (smaller variance of the model performance distribution) than $q=0$, i.e.,
 $\var(F_1(w_1^*), \dots, F_m(w_1^*)) <  \var(F_1(w_0^*), \dots, F_m(w_0^*))$.
\end{lemma}
\begin{proof}
Use the fact that $w_1^*$ is the optimal solution of $\min_w~f_1(w)$, and $w_0^*$ is the optimal solution of $\min_w~f_0(w)$, we get
\begin{align}
    \frac{\sum_{k=1}^m F_k^2(w^*_1)}{m} - \left(\frac{1}{m}\sum_{i=1}^m F_k(w^*_1)\right)^2 &\leq  \frac{\sum_{k=1}^m F_k^2(w^*_0)}{m} - \left(\frac{1}{m}\sum_{i=1}^m F_k(w^*_1)\right)^2 \nonumber\\
    &\leq \frac{\sum_{k=1}^m F_k^2(w^*_0)}{m} - \left(\frac{1}{m}\sum_{i=1}^m F_k(w^*_0)\right)^2.
\end{align}
\end{proof}

\begin{lemma} \label{lemma:uni_angle}
$q=1$ leads to a more fair solution (larger cosine similarity between the performance distribution and $\bf{1}$) than $q=0$, i.e.,
\begin{align*}
    \frac{\frac{1}{m} \sum_{k=1}^m F_k(w_1^*)}{\sqrt{\frac{1}{m}F^2_k(w_1^*)}} \geq \frac{\frac{1}{m} \sum_{k=1}^m F_k(w_0^*)}{\sqrt{\frac{1}{m}F^2_k(w_0^*)}}.
\end{align*}
\end{lemma}

\begin{proof} As $\frac{1}{m} \sum_{k=1}^m F_k(w_1^*) \geq \frac{1}{m} \sum_{k=1}^m F_k(w_0^*)$ and $\frac{1}{m} \sum_{k=1}^m F^2_k(w_1^*) \geq \frac{1}{m} \sum_{k=1}^m F^2_k(w_0^*)$, it directly follows that
\begin{equation*}
    \frac{\frac{1}{m} \sum_{k=1}^m F_k(w_1^*)}{\sqrt{\frac{1}{m}F^2_k(w_1^*)}} \geq \frac{\frac{1}{m} \sum_{k=1}^m F_k(w_0^*)}{\sqrt{\frac{1}{m}F^2_k(w_0^*)}}.
\end{equation*}
\end{proof}
We next provide results based on Definition \ref{def:uni_3}. It states that for arbitrary $q \geq 0$, by increasing $q$ for a small amount, we can get more uniform performance distributions defined over higher-orders of the performance.
\begin{lemma}\label{lemma:uni_entropy}
Let $F(w)$ be twice differentiable in $w$ with $\nabla^2 F(w) \succ 0$ (positive definite). 
    The derivative of $\widetilde{H}(F^q(w_p^*))$ with respect to the variable $p$ evaluated at the point $p=q$ is non-negative, i.e., 
    \begin{align*}
        \left. \frac{\partial}{\partial p} \widetilde{H}(F^q(w_p^*)) \right|_{p=q} \geq 0,
    \end{align*}
where $\widetilde{H}(F^q(w_p^*))$ is defined in~\eqref{eq:H-def}. 
\end{lemma}

\begin{proof} 
\begin{align}
    \left. \frac{\partial}{\partial p} \widetilde{H}({F}^q(w^*_p))\right|_{p=q} & = - \frac{\partial}{\partial p} \left. \sum_{k} \frac{F_k^q(w^*_p)}{\sum_\kappa F_\kappa^q (w^*_p)} \ln \left(\frac{F_k^q(w^*_p)}{\sum_\kappa F_\kappa^q (w^*_p)}\right) 
    \right|_{p=q} \\
    & = - \frac{\partial}{\partial p} \left. \sum_{k} \frac{F_k^q(w^*_p)}{\sum_\kappa F_\kappa^q (w^*_p)} \ln \left(
F_k^q(w^*_p)    
    \right) 
     \right|_{p=q} \nonumber\\
     &\quad + \frac{\partial}{\partial p} \left. \ln \sum_\kappa F_\kappa^q (w^*_p) 
    \right|_{p=q} \\
    & = -   \sum_{k} \frac{\left(\left.\frac{\partial}{\partial p} w^*_p \right|_{p=q} \right)^\top \nabla_{w} F_k^q(w^*_q)}{\sum_\kappa F_\kappa^q (w^*_q)} \ln \left(
F_k^q(w^*_q)    
    \right) 
      \nonumber \\
      & \quad  -   \sum_{k} \frac{F_k^q(w^*_q)}{\sum_\kappa F_\kappa^q (w^*_q)} \frac{\left(\left.\frac{\partial}{\partial p} w^*_p \right|_{p=q}\right)^\top \nabla_{w} F_k^q(w^*_q)}{F_k^q(w^*_q)} \\
      & = -   \sum_{k} \frac{\left(\left.\frac{\partial}{\partial p} w^*_p \right|_{p=q} \right)^\top \nabla_{w} F_k^q(w^*_q)}{\sum_\kappa F_\kappa^q (w^*_q)} \left(\ln \left(
F_k^q(w^*_q)     \right) + 1
    \right) . \label{eq:last}
\end{align}
Now, let us examine $\left.\frac{\partial}{\partial p} w^*_p \right|_{p=q}$. We know that 
$
\sum_k \nabla_w F_k^p(w^*_p) = 0$ by definition. Taking the derivative with respect to $p$, we have
\begin{equation}
\sum_k \nabla^2_w F_k^p (w^*_p) \frac{\partial}{\partial p} w^*_p +  \sum_k \left(\ln F_k^p(w^*_p) +1 \right)\nabla_w F_k^p(w^*_p) = 0.
\end{equation}
Invoking implicit function theorem, 
\begin{equation}
    \frac{\partial}{\partial p} w^*_p  = - \left( \sum_k \nabla^2_w F_k^p (w^*_p)\right)^{-1} \sum_k\left(\ln F_k^p(w^*_p) +1 \right)\nabla_w F_k^p(w^*_p).
\label{eq:implicit-func-thm}
\end{equation}
Plugging $\left.\frac{\partial}{\partial p} w^*_p \right|_{p=q}$ into (\ref{eq:last}), we get that $\left. \frac{\partial}{\partial p} \widetilde{H}({F}^q(w^*_p)) \right|_{p=q}\geq 0$ completing the proof.
\end{proof}

Lemma~\ref{lemma:uni_entropy} states that for any $p$, the performance distribution of $\{F_1^{p}(w^*_{p+\epsilon}),\dots, F_m^{p}(w_{p+\epsilon}^*)\}$ is guaranteed to be more uniform based on Definition~\ref{def:uni_3} than that of $\{F_1^{p}(w^*_{p}),\dots, F_m^{p}(w_{p}^*)\}$ for a small enough $\epsilon$. {Note that Lemma~\ref{lemma:uni_entropy} is different from the existing results on the monotonicity of entropy under the tilt operation, which would imply that $\frac{\partial}{\partial q} \widetilde{H}(F^q(w^*_p)) \leq 0$  for all $q\geq0$ (see~\citet[Lemma 11]{beirami2018characterization}).}

{Ideally, we would like to prove a result more general than Lemma~\ref{lemma:uni_entropy}, implying that the distribution $\{F_1^q(w^*_{p+\epsilon}),\dots, F_m^{q}(w_{p+\epsilon}^*)\}$ is more uniform than $\{F_1^{q}(w^*_{p}),\dots, F_m^{q}(w_{p}^*)\}$ for any $p, q$ and small enough $\epsilon$. We prove this result for the special case of $m=2$ in the following.

\begin{lemma}
\label{lemma:fair-general}
Let $F(w)$ be twice differentiable in $w$ with $\nabla^2 F(w) \succ 0$ (positive definite). 
    If $m=2$, for any $q \in \mathbb{R}^+$, the derivative of $\widetilde{H}(F^q(w_p^*))$ with respect to the variable $p$  is non-negative, i.e., 
    \begin{align*}
         \frac{\partial}{\partial p} \widetilde{H}(F^q(w_p^*))  \geq 0,
    \end{align*}
where $\widetilde{H}(F^q(w_p^*))$ is defined in~\eqref{eq:H-def}.
\end{lemma}
\begin{proof}
First, we invoke Lemma~\ref{lemma:uni_entropy} to obtain that
\begin{equation}
    \left. \frac{\partial}{\partial p} \widetilde{H}(F^q(w_p^*)) \right|_{p=q} \geq 0.
\label{eq:16}
\end{equation}
Let
\begin{equation}
    \theta_q(w):= \frac{F_1^q(w)}{F_1^q(w) + F_2^q(w)}.
    \label{eq:theta}
\end{equation}
Without loss of generality assume that $\theta_q(w^*_p) \in (0, \frac{1}{2}]$, as we can relabel $F_1$ and $F_2$ otherwise. Then, given that $m=2,$ we conclude from~\eqref{eq:16} along with the monotonicity of the binary entropy function in $(0, \frac{1}{2}]$  that 
\begin{equation}
        \left. \frac{\partial}{\partial p} \theta_q(w^*_p) \right|_{p=q} \geq 0,
\end{equation}
which in conjunction with~\eqref{eq:theta} implies that
\begin{equation}
        \left. \frac{\partial}{\partial p} 
        \left(\frac{F_1(w^*_p)}{F_2(w^*_p)}\right)^q
        \right|_{p=q} \geq 0.
\end{equation}
Given the monotonicity of $x^q$ with respect to $x$ for all $q>0$, it can be observed that the above is sufficient to imply that for any $q>0$,
\begin{equation}
     \frac{\partial}{\partial p} 
       \left( \frac{F_1(w^*_p)}{F_2(w^*_p)}\right)^q
     \geq 0.
\end{equation}
Going all of the steps back we  would obtain that for all $p>0$
\begin{equation}
     \frac{\partial}{\partial p} \widetilde{H}(F^q(w_p^*))  \geq 0.
\end{equation}
This completes the proof of the lemma.
\end{proof}
We conjecture that the statement of  Lemma~\ref{lemma:uni_entropy} is true for all $q \in \mathbb{R}^+$, which would be equivalent to the statement of  Lemma~\ref{lemma:fair-general} holding true for all $m\in \mathbb{N}$.
}

{Thus far, we provided results that showed that $q$-FFL promotes fairness in three different senses. Next, we further provide a result on equivalence between the geometric and information-theoretic notions of fairness.}

\vspace{.25em}
\begin{lemma}[\textit{Equivalence between uniformity in entropy and cosine distance}] 
\label{lemma:uni_relation}
\qffl encourages more uniform performance distributions in the cosine distance sense (Definition~\ref{def:uni_2}) if any only if it encourages more uniform performance distributions in the entropy sense (Definition~\ref{def:uni_3}), i.e., (a) holds if and only if (b) holds where \\
(a) for any $p, q \in \mathbb{R}$, the derivative of $H(F^q(w_p^*))$ with respect to $p$ is non-negative, \\ (b) for any $0 \leq t \leq r, 0 \leq v \leq u$, $\frac{f_t(w_u^*)}{f_r(w_u^*)} \geq \frac{f_t(w_v^*)}{f_r(w_v^*)}$.
\end{lemma}

\begin{proof}
    Definition~\ref{def:uni_3} is a special case of $H(F^q(w_p^*))$ with $q=1$. If $\widetilde{H}(F^q(w_p^*))$ increases with $p$ for any $p,q$, then we are guaranteed to get more fair solutions based on Definition~\ref{def:uni_3}. Similarly, Definition~\ref{def:uni_2} is a special case of $\frac{f_t(w_u^*)}{f_r(w_u^*)}$ with $t=0, r=1$. If $\frac{f_t(w_u^*)}{f_r(w_u^*)}$ increases with $u$ for any $t \leq r$, \qffl can also obtain more fair solutions under Definition~\ref{def:uni_2}. 
    
    Next, we show that (a) and (b) are equivalent measures of fairness.
    
    For any $r \geq t \geq 0$, and any $u \geq v \geq 0$,

\begin{align}
\frac{f_t(w_u^*)}{f_r(w_u^*)} \geq \frac{f_t(w_v^*)}{f_r(w_v^*)} &\iff
\ln \frac{f_t(w^*_u)}{f_r(w^*_u)} - \ln \frac{f_t(w^*_v)}{f_r(w^*_v)} \geq 0 \\
&\iff \int_v^u   \frac{\partial}{\partial \tau} \ln \frac{f_t(w^*_\tau)}{f_r(w^*_\tau)}  d \tau \geq 0 \\
&\iff \frac{\partial}{\partial p} \ln \frac{f_t(w^*_p)}{f_r(w^*_p)}  \geq 0,~\text{for any } p \geq 0 \\
& \iff \frac{\partial}{\partial p} \ln f_{r}(w^*_{p})  - \frac{\partial}{\partial p} \ln f_{t}(w^*_{p}) \leq 0,~\text{for any } p \geq 0  \\
&\iff \int_t^r \frac{\partial^2}{\partial p \partial q} \ln f_q(w^*_p) dq \leq 0~\text{for any } p, q \geq 0 \\
&\iff \frac{\partial^2}{\partial p \partial q} \ln f_q(w^*_p) \leq 0,~\text{for any } p, q \geq 0 \\
&\iff  \frac{\partial}{\partial p} \widetilde{H}({F}^q(w^*_p)) \geq 0,~\text{for any } p, q \geq 0.
\end{align}
The last inequality is obtained using the fact that by taking the derivative of $\ln f_q(w_p^*)$ with respect to $q$, we get $-\widetilde{H}(F^q(w_p^*))$.
\end{proof}

{\bf Discussions.} We give geometric (Definition~\ref{def:uni_2}) and information-theoretic (Definition~\ref{def:uni_3}) interpretations of our uniformity/fairness notion and provide uniformity guarantees under the \qffl objective in some cases (Lemma~\ref{lemma:uni_variance}, Lemma~\ref{lemma:uni_angle}, and Lemma~\ref{lemma:uni_entropy}). We reveal interesting relations between the geometric and information-theoretic interpretations in Lemma~\ref{lemma:uni_relation}. Future work would be to gain further understandings for more general cases indicated in Lemma~\ref{lemma:uni_relation}.

\subsection{Generalization bounds}\label{app:theory:gen}
In this section, we first describe the setup we consider in more detail, and then provide generalization bounds of \qffl. One benefit of \qffl is that it allows for a flexible trade-off between fairness and accuracy, which generalizes \afl (a special case of \qffl with $q \to \infty$). We also provide learning bounds that generalize the bounds of the \afl objective, as described below.

Suppose the service provider is interested in minimizing the loss over a distributed network of devices, with possibly unknown weights on each device:
\begin{align}
L_\lambda(h) = \sum_{k=1}^m \lambda_k \mathbb{E}_{(x,y)\sim D_k} [l(h(x), y)],
\end{align}
where $\lambda$ is in a probability simplex $\Lambda$, $m$ is the total number of devices, $D_k$ is the local data distribution for device $k$, $h$ is the hypothesis function, and $l$ is the loss. We use $\hat{L}_\lambda(h)$ to denote the empirical loss:
\begin{align}
    \hat{L}_\lambda(h) = \sum_{k=1}^m \frac{\lambda_k}{n_k} \sum_{j=1}^{n_k} l(h(x_{k,j}), y_{k,j}),
\end{align}
where $n_k$ is the number of local samples on device $k$ and $(x_{k,j}, y_{k,j}) \sim D_k$.

We consider a slightly different, unweighted version of \qffl:
\begin{align}\label{obj: qffl_unweighted}
    \min_w~ f_q(w) = \frac{1}{m} \sum_{k=1}^m F_k^{q+1}(w) \, ,
\end{align}

which is equivalent to minimizing the empirical loss
\begin{align}
     \tilde{L}_q(h) = \max_{\nu, \|\nu\|_p \leq 1} \sum_{k=1}^m \frac{\nu_i}{n_k} \sum_{j=1}^{n_k} l(h(x_{k,j}), y_{k,j}),
\end{align}
where $\frac{1}{p}+\frac{1}{q+1}=1$ ($p \geq 1, q \geq 0$).

\begin{lemma}[\textit{Generalization bounds of \qffl for a specific $\lambda$}]\label{lemma:generalization_0}
Assume that the loss $l$ is bounded by $M>0$ and the numbers of local samples are $(n_1, \cdots, n_m)$. Then, for any $\delta>0$, with probability at least $1-\delta$, the following holds for any $\lambda \in \Lambda, h \in H$:
\begin{align}
    L_{\lambda}(h) \leq   A_q(\lambda)\tilde{L}_q(h) +\mathbb{E} \left[ \max_{h
    \in H} L_{\lambda}(h) - \hat{L}_{\lambda}(h) \right] +
M \sqrt{\sum^m_{k = 1}\frac{\lambda^2_k}{2n_k} \log \frac{1}{\delta}},
\end{align}
where $A_q(\lambda) = \|\lambda\|_p$, and $1/p + 1/(q+1) = 1$.
\end{lemma}

\begin{proof}
Similar to the proof in~\citet{mohri2019agnostic}, for any $\delta>0$, the following inequality holds with probability at least $1-\delta$ for any $\lambda \in \Lambda, h \in H$:
\begin{align} \label{eq:a}
     L_{\lambda}(h) \leq  \hat{L}_\lambda(h) +\mathbb{E} \left[ \max_{h
    \in H} L_{\lambda}(h) - \hat{L}_{\lambda}(h) \right] +
M \sqrt{\sum^m_{k = 1}\frac{\lambda^2_k}{2n_k} \log \frac{1}{\delta}}.
\end{align}

Denote the empirical loss on device $k$ $\frac{1}{n_k}\sum_{j=1}^{n_k} l(h(x_{k,j}), y_{k,j})$ as $F_k$. From H\"older's inequality, we have
\begin{align*}
    \hat{L}_\lambda(h) = \sum_{k=1}^m\lambda_k F_k \leq \left(\sum_{k=1}^m \lambda_k^p \right)^{\frac{1}{p}} \left(\sum_{k=1}^m F_k^{q+1} \right)^{\frac{1}{q+1}} = A_q(\lambda) \tilde{L}_q(h),~\frac{1}{p}+\frac{1}{q+1}=1.
\end{align*}
Plugging $\hat{L}_\lambda(h) \leq A_q(\lambda) \tilde{L}_q(h)$ into (\ref{eq:a}) yields the results.
\end{proof}

\begin{theorem}[\textit{Generalization bounds of \qffl for any $\lambda$}]\label{theorem:generalization}
Assume that the loss $l$ is bounded by $M>0$ and the number of local samples is $(n_1, \cdots, n_m)$. Then, for any $\delta>0$, with probability at least $1-\delta$, the following holds for any $\lambda \in \Lambda, h \in H$:
\begin{align}
    L_{\lambda}(h) \leq  \max_{\lambda \in \Lambda} \left(A_q(\lambda)\right)\tilde{L}_q(h) + \max_{\lambda \in \Lambda}\left(\mathbb{E} \left[ \max_{h
    \in H} L_{\lambda}(h) - \hat{L}_{\lambda}(h) \right] +
M \sqrt{\sum^m_{k = 1}\frac{\lambda^2_k}{2n_k} \log \frac{1}{\delta}} \right),
\end{align}
where $A_q(\lambda) = \|\lambda\|_p$, and $1/p + 1/(q+1) = 1$.
\end{theorem}

\begin{proof}
This directly follows from Lemma \ref{lemma:generalization_0}, by taking the maximum over all possible $\lambda$'s in $\Lambda$.
\end{proof}


{\textbf{Discussions.} From Lemma~\ref{lemma:generalization_0}, letting $\lambda = \left(\frac{1}{m}, \cdots, \frac{1}{m}\right)$ and $q \to \infty$, we recover the generalization bounds in \afl~\citep{mohri2019agnostic}. In that sense, our generalization results extend those of \afl's. In addition, it is not straightforward to derive an optimal $q$ with the tightest generalization bound from Lemma~\ref{lemma:generalization_0} and Theorem~\ref{theorem:generalization}. In practice, our proposed method \qfflavg allows us to tune a family of $q$'s by re-using the step-sizes.}

\newpage
\section{$\alpha$-fairness and \qffl}\label{appen: alphafair}

As discussed in Section~\ref{sec: relwork}, while it is natural to consider the $\alpha$-fairness framework for machine learning, we are unaware of any work that uses $\alpha$-fairness to modify machine learning training objectives. We provide additional details on the framework below; for further background on $\alpha$-fairness and fairness in resource allocation more generally, we defer the reader to~\citet{shi2014fairness, mo2000fair}. 

$\alpha$-fairness~\citep{lan2010axiomatic,mo2000fair} is a popular fairness metric widely-used in resource allocation problems. The framework defines a family of overall utility functions that can be derived by summing up the following function of the individual utilities of the users in the network:
\begin{align*}
    U_{\alpha}(x)=
    \begin{cases}
      \ln(x), & \text{if}\ \alpha=1 \\
      \frac{1}{1-\alpha} x^{1-\alpha}, & \text{if}\  \alpha \geq 0, \alpha \neq 1 \, .
    \end{cases}
\end{align*}
Here $U_\alpha(x)$ represents the individual utility of some specific user given $x$ allocated resources (e.g., bandwidth). 
The goal is to find a resource allocation strategy to maximize the sum of the individual utilities.
This family of functions includes a wide range of popular fair resource allocation strategies. In particular, the above function represents zero fairness with $\alpha=0$, proportional fairness~\citep{kelly1997charging} with $\alpha=1$, harmonic mean fairness~\citep{dashti2013harmonic} with $\alpha=2$, and max-min fairness~\citep{radunovic2007unified} with $\alpha=+\infty$.

Note that in federated learning, we are dealing with costs and not utilities. Thus, max-min in resource allocation corresponds to min-max in our setting.
With this analogy, it is clear that in our proposed objective \qffl (\ref{eq: qffl}), the case where $q=+ \infty$ corresponds to min-max fairness since it is optimizing for the worst-performing device, similar to what was proposed in~\citet{mohri2019agnostic}. Also, $q=0$ corresponds to zero fairness, which reduces to the original \fedavg objective (\ref{eq: original_obj}). In resource allocation problems, $\alpha$ can be tuned for trade-offs between fairness and system efficiency. In federated settings, $q$ can be tuned based on the desired level of fairness (e.g., desired variance of accuracy distributions) and other performance metrics such as the overall accuracy. For instance, in Table \ref{table: compare_agnostic} in Section \ref{sec: compare_baseline}, we demonstrate on two datasets that as $q$ increases, the overall average accuracy decreases slightly while the worst accuracies are increased significantly and the variance of the accuracy distribution decreases.

\newpage
\section{Pseudo-code of Algorithms} \label{appen:code}

\subsection{The \fedavg Algorithm}\label{app:fedavg}

\begin{algorithm}[H]
        \caption{Federated Averaging~\cite{mcmahan2016communication} (\fedavg)}
        \label{alg: fedavg}
        \begin{algorithmic}
    	    \STATE {\bf Input:}  $K$, $T$, $\eta$, $E$, $w^0$, $N$, $p_k$, $k=1,\cdots, N$\;
	        \FOR  {$t=0, \cdots, T-1$}
		        \STATE Server randomly chooses a subset $S_t$ of $K$ devices  (each device $k$ is chosen with probability $p_k$)
		        \STATE Server sends $w^t$ to all chosen devices
		        \STATE Each device $k$ updates $w^t$ for $E$ epochs of SGD on $F_k$ with step-size $\eta$ to obtain $w_k^{t+1}$
		        \STATE Each chosen device $k$ sends $w_k^{t+1}$ back to the server
		        \STATE Server aggregates the $w$'s as {\small$w^{t+1} = \frac{1}{K}\sum_{k \in S_t} w_k^{t+1}$}
	    \ENDFOR
	  \end{algorithmic}
\end{algorithm}

\subsection{The \qmaml Algorithm} \label{app:qmaml}

\begin{algorithm}[H]
        \caption{\qffl applied to \maml: \qmaml}
        \label{alg: qmaml}
        \begin{algorithmic}[1]
    	    \STATE {\bf Input:}  $K$, $T$, $\eta$, $w^0$, $N$, $p_k$, $k=1,\cdots, N$\;
	        \FOR  {$t=0, \cdots, T-1$}
		        \STATE Sample a batch of $S_t$ ($|S_t|=K$) tasks randomly (each task $k$ is chosen with probability $p_k$)
		        \STATE Send $w^t$ to all sampled tasks
		        \STATE Each task $k \in S_t$ samples data $D_k$ from the training set and $D_k'$ from the testing set, and computes updated parameters on $D$: $w_k^t = w^t - \eta \nabla F_k(w^t)$
		        \STATE Each task $k \in S_t$ computes the gradients $\nabla F_k(w_k^t)$ on $D'$
		        \STATE Each task $k \in S_t$ computes:
                \vspace{1mm}
                \begin{align*}
                    &\Delta_k^{t} = F_k^{q}(w_k^t) \nabla F_k(w_k^t) \\
                    & h_k^t = q F_k^{q-1}(w_k^t)\|\nabla F_k(w_k^t)\|^2 + L F_k^q(w_k^t)
                \end{align*}
                \vspace{-.5em}
                \STATE $w^{t+1}$ is updated as:
                \vspace{-.5em}
                \begin{align*}
                    w^{t+1} = w^t - \frac{\sum_{k\in S_t}\Delta_k^{t}}{\sum_{k\in S_t}h_k^t}
                \end{align*}
	    \ENDFOR
	  \end{algorithmic}
\end{algorithm}

\newpage
\section{Experimental Details} \label{app:exp_setup}
\subsection{Datasets and Models}
\label{appen: data}
We provide full details on the datasets and models used in our experiments. The statistics of four federated datasets used in federated learning (as opposed to meta-learning) experiments are summarized in Table \ref{table: data}. We report the total number of devices, the total number of samples, and mean and deviation in the sizes of total data points on each device. Additional details on the datasets and models are described below.

\begin{itemize}[leftmargin=*]
    \item {\bf Synthetic: } We follow a similar set up as that in~\citet{shamir2014communication} and impose additional heterogeneity. The model is $y=\textrm{\emph{argmax}(softmax}(Wx+b))$, $x \in \mathbb{R}^{60}, W \in \mathbb{R}^{10 \times 60}, b \in \mathbb{R}^{10}$, and the goal is to learn a global $W$ and $b$. Samples $(X_k, Y_k)$ and local models on each device $k$ satisfies $W_k \sim \mathcal{N}(u_k, 1)$, $b_k \sim \mathcal{N}(u_k, 1)$, $u_k \sim \mathcal{N}(0, 1)$; $x_k \sim \mathcal{N}(v_k, \Sigma)$, where the covariance matrix $\Sigma$ is diagonal with $\Sigma_{j,j}=j^{-1.2}$. Each element in $v_k$ is drawn from $\mathcal{N}(B_k, 1), B_{k} \sim \mathcal{N}(0, 1)$. There are 100 devices in total and the number of samples on each devices follows a power law.
    \item {\bf Vehicle\footnote{\url{http://www.ecs.umass.edu/~mduarte/Software.html}}:} We use the same Vehicle Sensor (Vehicle) dataset as~\citet{smith2017federated}, modelling each sensor as a device. This dataset consists of acoustic, seismic, and infrared sensor data collected from a distributed network of 23 sensors~\cite{duarte2004vehicle}. Each sample has a 100-dimension feature and a binary label. We train a linear SVM to predict between AAV-type and DW-type vehicles. We tune the hyperparameters in SVM and report the best configuration.
    \item {\bf Sent140: } This dataset is a collection of tweets curated from 1,101 accounts from Sentiment140~\citep{go2009twitter} (Sent140) where each Twitter account corresponds to a device. The task is text sentiment analysis which we model as a binary classification problem. The model takes as input a 25-word sequence, embeds each word into a 300-dimensional space using pretrained Glove~\citep{pennington2014glove}, and outputs a binary label after two LSTM layers and one densely-connected layer.
    \item {\bf Shakespeare: } This dataset is built from \emph{The Complete Works of William Shakespeare}~\citep{mcmahan2016communication}. Each speaking role in the plays is associated with a device. We subsample 31 speaking roles to train a deep language model for next character prediction. The model takes as input an 80-character sequence, embeds each character into a learnt 8-dimensional space, and outputs one character after two LSTM layers and one densely-connected layer.
    \item {{\bf Omniglot: } The Omniglot dataset~\citep{lake2015human} consists of 1,623 characters from 50 different alphabets. We create 300 meta-training tasks from the first  1,200 characters, and 100 meta-testing tasks from the last 423 characters.  Each task is a 5-class classification problem where each character forms a class. The model is a convolutional neural network with two convolution layers and two fully-connected layers.}
\end{itemize}

\begin{table}[h]
	\caption{Statistics of federated datasets}
	\label{table: data}
	\centering
	\begin{tabular}{ lllll } 
			\toprule
			\textbf{Dataset} & \textbf{Devices} & \textbf{Samples} & 
			\multicolumn{2}{l}{\textbf{Samples/device}} \\
			\cmidrule(l){4-5}
			& &  & mean & stdev \\
			\hline
			Synthetic & 100 & 12,697 & 127  & 73 \\
			Vehicle & 23 & 43,695  & 1,899 &  349 \\
			Sent140 & 1,101  & 58,170 & 53 & 32  \\
			Shakespeare & 31 & 116,214  & 3,749 & 6,912 \\
			\bottomrule
	\end{tabular}
\end{table}

\subsection{Implementation Details}
\label{app:implementation}
\subsubsection{Machines}
We simulate the federated setting (one server and $m$ devices) on a server with 2 Intel$^\text{\textregistered}$ Xeon$^\text{\textregistered}$ E5-2650 v4 CPUs and 8 NVidia$^\text{\textregistered}$ 1080Ti GPUs.

\subsubsection{Software}
We implement all code in TensorFlow~\citep{abadi2016tensorflow} Version 1.10.1.
Please see \href{https://github.com/litian96/fair_flearn}{\texttt{github.com/litian96/fair\_flearn}} for full details. 

\subsubsection{Hyperparameters}
\label{appen: hyperparameter}
We randomly split data on each local device into 80\% training set, 10\% testing set, and 10\% validation set.
We tune a best $q$ from $\{0.001, 0.01, 0.1, 0.5, 1,2,5, 10, 15\}$ on the validation set and report accuracy distributions on the testing set. We pick up the $q$ value where the variance decreases the most, while the overall average accuracy change (compared with the $q=0$ case) is within 1\%. For each dataset, we repeat this process for five randomly selected train/test/validation splits, and report the mean and standard deviation across these five runs where applicable. For Synthetic, Vehicle, Sent140, and Shakespeare, optimal $q$ values are 1, 5, 1, and 0.001, respectively. For all datasets, we randomly sample 10 devices each round. We tune the learning rate and batch size on \fedavg and use the same learning rate and batch size for all \qfflavg experiments of that dataset. The learning rates for Synthetic, Vehicle, Sent140, and Shakespeare are 0.1, 0.01, 0.03, and 0.8, respectively. The batch sizes for Synthetic, Vehicle, Sent140, and Shakespeare are 10, 64, 32, and 10. The number of local epochs $E$ is fixed to be 1 for both \fedavg and \qfflavg regardless of the values of $q$.

In comparing \qfflavg's efficiency  with \qfflsgd, we also tune a best learning rate for \qfflsgd methods on $q=0$. For each comparison, we fix devices selected and mini-batch orders across all runs. We stop training when the training loss $F(w)$ does not decrease for 10 rounds.  When running \afl methods, we search for a best $\gamma_w$ and $\gamma_{\lambda}$ such that \afl achieves the highest testing accuracy on the device with the highest loss within a fixed number of rounds. For Adult, we use $\gamma_w=0.1$ and $\gamma_{\lambda}=0.1$; for Fashion MNIST, we use $\gamma_w=0.001$ and $\gamma_{\lambda}=0.01$. We use the same $\gamma_w$ as step-sizes for \qfflavg on Adult and Fashion MNIST. In Table \ref{table: compare_agnostic}, $q_1=0.01, q_2=2$ for \qffl on Adult and $q_1=5, q_2=15$ for \qffl on Fashion MNIST. Similarly, the number of local epochs is fixed to 1 whenever we perform local updates.

\newpage
\section{Full Experiments}
\label{appen: full_exp}

\subsection{Full results of previous experiments}\label{appen:full_exp_full_results}

\setlength{\tabcolsep}{3.2pt}
\paragraph{Fairness of \qffl under all uniformity metrics.} We demonstrate the fairness of \qffl in Table~\ref{table: fairness} in terms of variance. Here, we report similar results in terms of other uniformity measures (the last two columns).
\begin{table}[h]
	\caption{Full statistics of the test accuracy distribution for \qffl. \qffl increases the accuracy of the worst 10\% devices without decreasing the average accuracies. We see that \qffl encourages more uniform distributions under all uniformity metrics defined in Appendix~\ref{app:theory:gen}: (1) the variance of the accuracy distribution (Definition~\ref{def:uni_1}), (2) the cosine similarity/geometric angle between the accuracy distribution and the all-ones vector {\bf 1} (Definition~\ref{def:uni_2}), and (3) the KL-divergence between the normalized accuracy vector {$\va$} and the uniform distribution $\vu$, which can be directly translated to the entropy of $\va$ (Definition~\ref{def:uni_3}) .} 
	\centering
	\label{table: fairness_full}
	\begin{tabular}{ l | l | cccccc } 
			\toprule
			\multirow{2}{*}{\textbf{Dataset}} & \multirow{2}{*}{\textbf{Objective}} & \textbf{Average} & \textbf{\small{Worst 10\%}} & \textbf{Best 10\%} & \textbf{Variance} &\textbf{Angle} & \textbf{KL($\va$||$\vu$)}\\
			& & (\%) & (\%) & (\%) &  &($^\circ$) & \\
			\hline
			\multirow{2}{*}{Synthetic} & $q=0$ &  80.8 {\scriptsize $\pm$ .9} & 18.8 {\scriptsize $\pm$ 5.0} & 100.0 {\scriptsize $\pm$ 0.0} & 724 {\scriptsize $\pm$ 72} & 19.5 {\scriptsize $\pm$ 1.1} & .083 {\scriptsize $\pm$ .013} \\
			& $q=1$ &  79.0 {\scriptsize $\pm$ 1.2} & \textbf{31.1} {\scriptsize $\pm$ 1.8} & 100.0 {\scriptsize $\pm$ 0.0} & \textbf{472} {\scriptsize $\pm$ 14} & \textbf{16.0} {\scriptsize $\pm$ .5} & \textbf{.049} {\scriptsize $\pm$ .003}\\
			\hline
            \multirow{2}{*}{Vehicle} & $q=0$ & 87.3 {\scriptsize $\pm$ .5} & 43.0 {\scriptsize $\pm$ 1.0} & \textbf{95.7} {\scriptsize $\pm$ 1.0} & 291 {\scriptsize $\pm$ 18} & 11.3 {\scriptsize $\pm$ .3} & .031 {\scriptsize $\pm$ .003}\\
            & $q=5$ & 87.7 {\scriptsize $\pm$ .7} & \textbf{69.9} {\scriptsize $\pm$ .6} & 94.0 {\scriptsize $\pm$ .9} & \textbf{48} {\scriptsize $\pm$ 5} & \textbf{4.6} {\scriptsize $\pm$ .2} & \textbf{.003} {\scriptsize $\pm$ .000}\\
            \hline
            \multirow{2}{*}{Sent140} & $q=0$ & 65.1 {\scriptsize $\pm$ 4.8} & 15.9 {\scriptsize $\pm$ 4.9} & 100.0 {\scriptsize $\pm$ 0.0} & 697 {\scriptsize $\pm$ 132} & 22.4 {\scriptsize $\pm$ 3.3} & .104 {\scriptsize $\pm$ .034}\\
            & $q=1$ & 66.5 {\scriptsize $\pm$ .2} & \textbf{23.0} {\scriptsize $\pm$ 1.4}  & 100.0 {\scriptsize $\pm$ 0.0} & \textbf{509} {\scriptsize $\pm$ 30} & \textbf{18.8} {\scriptsize $\pm$ .5} & \textbf{.067} {\scriptsize $\pm$ .006} \\
            \hline
            \multirow{2}{*}{Shakespeare} & $q=0$ & 51.1 {\scriptsize $\pm$ .3} & 39.7 {\scriptsize $\pm$ 2.8} & \textbf{72.9} {\scriptsize $\pm$ 6.7} & 82 {\scriptsize $\pm$ 41} & 9.8 {\scriptsize $\pm$ 2.7} & .014 {\scriptsize $\pm$ .006} \\
            & $q=.001$ & 52.1 {\scriptsize $\pm$ .3}  & \textbf{42.1} {\scriptsize $\pm$ 2.1} & 69.0 {\scriptsize $\pm$ 4.4} & \textbf{54} {\scriptsize $\pm$ 27} & \textbf{7.9} {\scriptsize $\pm$ 2.3} & \textbf{.009} {\scriptsize $\pm$ .05}\\
			\bottomrule
	\end{tabular}
	\vspace{-3mm}
\end{table}

\paragraph{Fairness of \qffl with respect to training accuracy.} The empirical results in Section~\ref{sec:eval} are with respect to testing accuracy. As a sanity check, we show that \qffl also results in more fair training accuracy distributions in Figure \ref{fig: fairness_train} and Table \ref{table: fairness_train}.

\begin{figure}[h]
	\centering
	\includegraphics[width=0.95\textwidth]{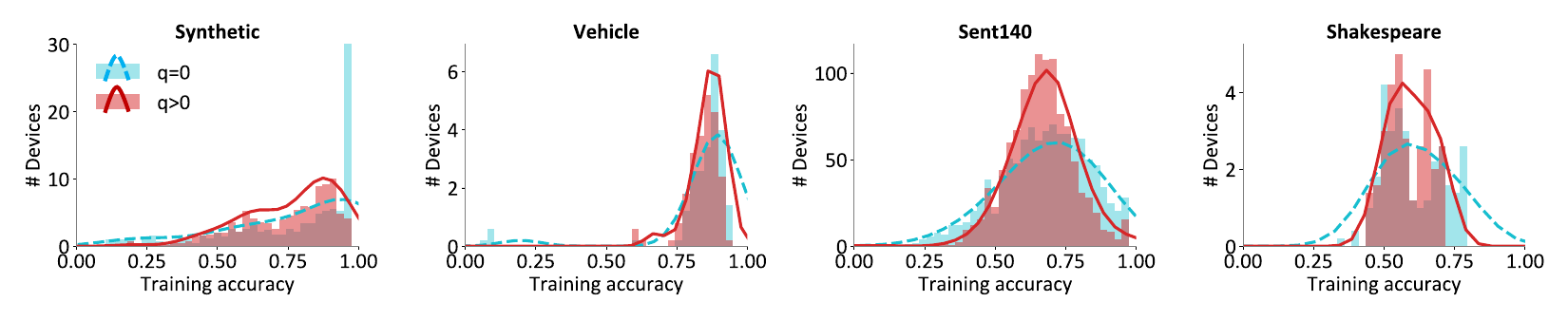}
	\caption{\qffl ($q>0$) results in more centered (i.e., fair) \emph{training} accuracy distributions across devices without sacrificing the average accuracy.}
	\vspace{-3mm}
	\label{fig: fairness_train}
\end{figure}

\setlength{\tabcolsep}{3.2pt}
\begin{table}[h]
	\caption{\qffl results in more fair training accuracy distributions  in terms of all uniformity measurements---(a) the accuracy variance, (b) the cosine similarity (i.e., angle) between the accuracy distribution and the all-ones vector {\bf 1}, and (c) the KL divergence between the normalized accuracy $\va$ and uniform distribution $\vu$. }
	\centering
	\label{table: fairness_train}
	\begin{tabular}{ l | l | cccccc } 
			\toprule
			\multirow{2}{*}{\textbf{Dataset}} & \multirow{2}{*}{\textbf{Objective}} & \textbf{Average} & \textbf{Worst 10\%} & \textbf{Best 10\%} & \textbf{Variance} & \textbf{Angle} & \textbf{KL($\va$||$\vu$)}\\
			& & (\%) & (\%) & (\%) &  & ($^\circ$)\\
			\hline
			 \multirow{2}{*}{Synthetic} & $q=0$ & 81.7 {\scriptsize $\pm$ .3}  & 23.6 {\scriptsize $\pm$ 1.1} & 100.0 {\scriptsize $\pm$ .0} & 597 {\scriptsize $\pm$ 10} & 17.5 {\scriptsize $\pm$ .3} & .061 {\scriptsize $\pm$ .002}\\
            & $q=1$ &  78.9 {\scriptsize $\pm$ .2}  & \textbf{41.8} {\scriptsize $\pm$ 1.0} & 96.8 {\scriptsize $\pm$ .5} & \textbf{292} {\scriptsize $\pm$ 11} & \textbf{12.5} {\scriptsize $\pm$ .2} & \textbf{.027} {\scriptsize $\pm$ .001}\\
            \hline
            \multirow{2}{*}{Vehicle} & $q=0$ & 87.5 {\scriptsize $\pm$ .2} & 49.5 {\scriptsize $\pm$ 10.2} & \textbf{94.9} {\scriptsize $\pm$ .7} & 237 {\scriptsize $\pm$ 97} & 10.2 {\scriptsize $\pm$ 2.4} & .025 {\scriptsize $\pm$ .011}\\
            & $q=5$ &  87.8 {\scriptsize $\pm$ .5} & \textbf{71.3} {\scriptsize $\pm$ 2.2} & 93.1 {\scriptsize $\pm$ 1.4} & \textbf{37} {\scriptsize $\pm$ 12} & \textbf{4.0} {\scriptsize $\pm$ .7} & \textbf{.003} {\scriptsize $\pm$ .001}\\
            \hline
            \multirow{2}{*}{Sent140} & $q=0$ & 69.8 {\scriptsize $\pm$ .8} & 36.9 {\scriptsize $\pm$ 3.1} & \textbf{94.4} {\scriptsize $\pm$ 1.1} & 278 {\scriptsize $\pm$ 44} & 13.6 {\scriptsize $\pm$ 1.1} & .032 {\scriptsize $\pm$ .006}\\
            & $q=1$ & 68.2 {\scriptsize $\pm$ .6} & \textbf{46.0} {\scriptsize $\pm$ .3} & 88.8 {\scriptsize $\pm$ .8}& \textbf{143} {\scriptsize $\pm$ 4} & \textbf{10.0} {\scriptsize $\pm$ .1} & \textbf{.017} {\scriptsize $\pm$ .000}\\
            \hline
            \multirow{2}{*}{Shakespeare} & $q=0$ & 72.7 {\scriptsize $\pm$ .8} & 46.4 {\scriptsize $\pm$ 1.4} & \textbf{79.7} {\scriptsize $\pm$ .9} & 116 {\scriptsize $\pm$ 8} & 9.9 {\scriptsize $\pm$ .3} & .015  {\scriptsize $\pm$ .001}\\
            & $q=.001$ & 66.7 {\scriptsize $\pm$ 1.2} & \textbf{48.0} {\scriptsize $\pm$ .4} & 71.2 {\scriptsize $\pm$ 1.9} & \textbf{56} {\scriptsize $\pm$ 9} & \textbf{7.1} {\scriptsize $\pm$ .5} & \textbf{.008} {\scriptsize $\pm$ .001}\\
			\bottomrule
	\end{tabular}
\end{table}

\paragraph{Average testing accuracy with respect to devices.} In Section \ref{sec: effectivenss}, we show that \qffl leads to more fair accuracy distributions while maintaining approximately the same testing accuracies. Note that we report average testing accuracy with respect to \emph{all data points} in Table \ref{table: fairness}. However, we observe similar results on average accuracy with respect to \emph{all devices} between $q=0$ and $q>0$ objectives, as shown in Table \ref{table: accuracy_device}. This indicates that \qffl can reduce the variance of the accuracy distribution without sacrificing the average accuracy over devices or over data points. 

\begin{table}[h]
	\caption{Average testing accuracy under \qffl objectives. We show that the resulting solutions of $q=0$ and $q>0$ objectives have approximately the same average accuracies both with respect to all data points and with respect to all devices.}
	\centering
	\label{table: accuracy_device}
	\begin{tabular}{ l | l | c|c } 
			\toprule
			\multirow{2}{*}{\textbf{Dataset}} & \multirow{2}{*}{\textbf{Objective}} & \textbf{Accuracy w.r.t. Data Points} & \textbf{Accuracy w.r.t. Devices}\\
			& & (\%) & (\%) \\
			\hline
			\multirow{2}{*}{Synthetic} & $q=0$ &  80.8 {\scriptsize $\pm$ .9} &  77.3 {\scriptsize $\pm$ .6} \\
			& $q=1$ &  79.0 {\scriptsize $\pm$ 1.2} & 76.3 {\scriptsize $\pm$ 1.7} \\
			\hline
            \multirow{2}{*}{Vehicle} & $q=0$ & 87.3 {\scriptsize $\pm$ .5} & 85.6 {\scriptsize $\pm$ .4} \\
            & $q=5$ & 87.7 {\scriptsize $\pm$ .7} &  86.5 {\scriptsize $\pm$ .7}\\
            \hline
            \multirow{2}{*}{Sent140} & $q=0$ & 65.1 {\scriptsize $\pm$ 4.8} &   64.6 {\scriptsize $\pm$ 4.5}\\
            & $q=1$ & 66.5 {\scriptsize $\pm$ .2} & 66.2 {\scriptsize $\pm$ .2}\\
            \hline
            \multirow{2}{*}{Shakespeare} & $q=0$ & 51.1 {\scriptsize $\pm$ .3} & 61.4 {\scriptsize $\pm$ 2.7}\\
            & $q=.001$ & 52.1 {\scriptsize $\pm$ .3}  &  60.0 {\scriptsize $\pm$ .5}\\
			\bottomrule
	\end{tabular}
\end{table}

\paragraph{Comparison with uniform sampling.} In Figure \ref{fig: compare_uniform_training} and Table \ref{table: compare_uniform_train}, we show that in terms of training accuracies, the uniform sampling heuristic may outperform \qffl (as opposed to the testing accuracy results in Section~\ref{sec:eval}). We suspect that this is because the uniform sampling baseline is a static method and is likely to overfit to those devices with few samples. In additional to Figure \ref{fig: compare_uniform_test} in Section~\ref{sec: compare_baseline}, we also report the average testing accuracy with respect to data points, best 10\%, worst 10\% accuracies, and the variance (along with two other uniformity measures) in Table \ref{table: compare_uniform_test}.

\begin{figure}[h]
    \centering
    \includegraphics[width=\textwidth]{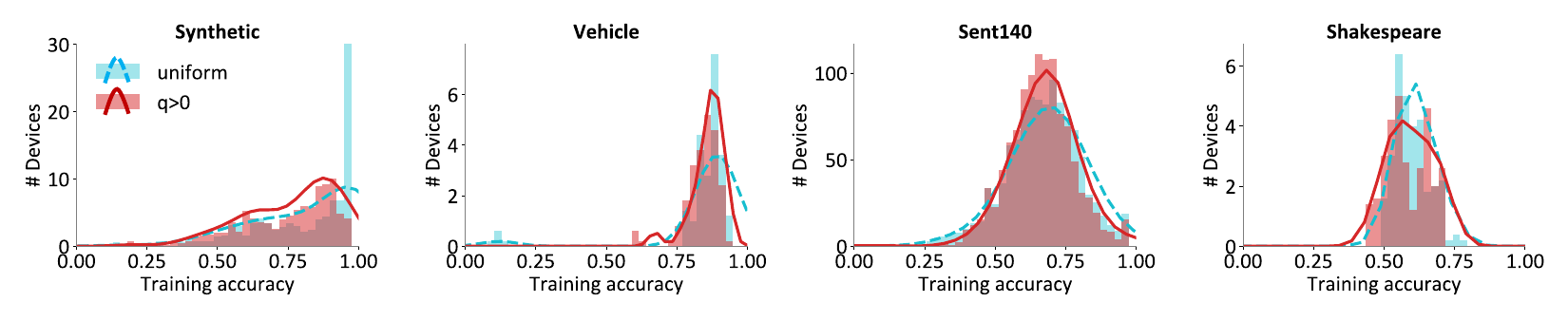}
    \caption{\qffl ($q>0$) compared with uniform sampling in training accuracy. We see that on some datasets uniform sampling has higher (and more fair) training accuracies due to the fact that it is overfitting to devices with few samples.}
    \label{fig: compare_uniform_training}
\end{figure}

\setlength{\tabcolsep}{3.2pt}
\begin{table}[h]
	\caption{More statistics comparing the  uniform sampling objective with \qffl in terms of training accuracies.  We observe that uniform sampling could result in more fair \emph{training} accuracy distributions with smaller variance in some cases. }
	\centering
	\label{table: compare_uniform_train}
	\begin{tabular}{ l | l | cccccc } 
			\toprule
			\multirow{2}{*}{\textbf{Dataset}} & \multirow{2}{*}{\textbf{Objective}} & \textbf{Average} & \textbf{Worst 10\%} & \textbf{Best 10\%} & \textbf{Variance} & \textbf{Angle} & \textbf{KL($\va$||$\vu$)} \\
			& & (\%) & (\%) & (\%) &  & ($^\circ$)\\
			\hline
			 \multirow{2}{*}{Synthetic} & uniform & 83.5 {\scriptsize $\pm$ .2} & \textbf{42.6} {\scriptsize $\pm$ 1.4} & \textbf{100.0} {\scriptsize $\pm$ .0} & 366 {\scriptsize $\pm$ 17} & 13.4 {\scriptsize $\pm$ .3}  & .031 {\scriptsize $\pm$ .002}\\
            & $q=1$ &  78.9 {\scriptsize $\pm$ .2} & 41.8 {\scriptsize $\pm$ 1.0} & 96.8 {\scriptsize $\pm$ .5} & \textbf{292} {\scriptsize $\pm$ 11}  & \textbf{12.5} {\scriptsize $\pm$ .2} & \textbf{.027} {\scriptsize $\pm$ .001}\\
            \hline
            \multirow{2}{*}{Vehicle} & uniform & 87.3 {\scriptsize $\pm$ .3} & 46.6 {\scriptsize $\pm$ .8} & \textbf{94.8} {\scriptsize $\pm$ .5} & 261 {\scriptsize $\pm$ 10} & 10.7 {\scriptsize $\pm$ .2} & .027 {\scriptsize $\pm$ .001}\\
            & $q=5$ & 87.8 {\scriptsize $\pm$ .5} & \textbf{71.3} {\scriptsize $\pm$ 2.2} & 93.1 {\scriptsize $\pm$ 1.4} & \textbf{37} {\scriptsize $\pm$ 12} & \textbf{4.0} {\scriptsize $\pm$ .7} & \textbf{.003} {\scriptsize $\pm$ .001}\\
            \hline
            \multirow{2}{*}{Sent140} & uniform & 69.1 {\scriptsize $\pm$ .5} & 42.2 {\scriptsize $\pm$ 1.1} & \textbf{91.0} {\scriptsize $\pm$ 1.3} & 188 {\scriptsize $\pm$ 19} & 11.3 {\scriptsize $\pm$ .5} & .022 {\scriptsize $\pm$ .002}\\
            & $q=1$ & 68.2 {\scriptsize $\pm$ .6} & \textbf{46.0}  {\scriptsize $\pm$ .3} & 88.8 {\scriptsize $\pm$ .8}& \textbf{143} {\scriptsize $\pm$ 4} & \textbf{10.0} {\scriptsize $\pm$ .1} & \textbf{.017} {\scriptsize $\pm$ .000} \\
            \hline
            \multirow{2}{*}{Shakespeare} & uniform & 57.7 {\scriptsize $\pm$ 1.5}  & \textbf{54.1} {\scriptsize $\pm$ 1.7} & \textbf{72.4} {\scriptsize $\pm$ 3.2} & \textbf{32} {\scriptsize $\pm$ 7} & \textbf{5.2} {\scriptsize $\pm$ .5} & \textbf{.004} {\scriptsize $\pm$ .001}\\
            & $q=.001$ & 66.7 {\scriptsize $\pm$ 1.2} & 48.0 {\scriptsize $\pm$ .4} & 71.2 {\scriptsize $\pm$ 1.9} & 56 {\scriptsize $\pm$ 9} & 7.1 {\scriptsize $\pm$ .5} & .008 {\scriptsize $\pm$ .001} \\
			\bottomrule
	\end{tabular}
\end{table}

\setlength{\tabcolsep}{3.2pt}
\begin{table}[h]
	\caption{More statistics showing more fair solutions induced  by \qffl compared with the uniform sampling baseline in terms of \emph{test} accuracies. Again, we observe that under \qffl,  the testing accuracy of the worst 10\% devices tends to increase compared with uniform sampling, and the variance of the final testing accuracies is smaller. Similarly, \qffl is also more fair than uniform sampling in terms of other uniformity metrics.}
	\centering
	\label{table: compare_uniform_test}
	\begin{tabular}{ l | l | cccccc } 
		\toprule
		\multirow{2}{*}{\textbf{Dataset}} & \multirow{2}{*}{\textbf{Objective}} & \textbf{Average} & \textbf{Worst 10\%} & \textbf{Best 10\%} & \textbf{Variance} & \textbf{Angle} & \textbf{KL($\va$||$\vu$)} \\
		& & (\%) & (\%) & (\%) &  & ($^\circ$)\\
		\hline
		\multirow{2}{*}{Synthetic} & uniform & 82.2 {\scriptsize $\pm$ 1.1} & 30.0 {\scriptsize $\pm$ .4} & 100.0 {\scriptsize $\pm$ .0} & 525 {\scriptsize $\pm$ 47} & \textbf{15.6} {\scriptsize $\pm$ .8} & \textbf{.048} {\scriptsize $\pm$ .007} \\
		& $q=1$ &  79.0 {\scriptsize $\pm$ 1.2} & \textbf{31.1} {\scriptsize $\pm$ 1.8} & 100.0 {\scriptsize $\pm$ 0.0} & \textbf{472} {\scriptsize $\pm$ 14} &  16.0 {\scriptsize $\pm$ .5} & .049 {\scriptsize $\pm$ .003}\\
		\hline
		\multirow{2}{*}{Vehicle} & uniform & 86.8 {\scriptsize $\pm$ .3} & 45.4 {\scriptsize $\pm$ .3} & \textbf{95.4} {\scriptsize $\pm$ .7} & 267 {\scriptsize $\pm$ 7} & 10.8 {\scriptsize $\pm$ .1} & .028 {\scriptsize $\pm$ .001} \\
		& $q=5$ & 87.7 {\scriptsize $\pm$ 0.7} & \textbf{69.9} {\scriptsize $\pm$ .6} & 94.0 {\scriptsize $\pm$ .9} & \textbf{48} {\scriptsize $\pm$ 5} & \textbf{4.6} {\scriptsize $\pm$ .2} & \textbf{.003} {\scriptsize $\pm$ .000} \\
		\hline
		\multirow{2}{*}{Sent140} & uniform & 66.6 {\scriptsize $\pm$ 2.6} & 21.1 {\scriptsize $\pm$ 1.9} & 100.0 {\scriptsize $\pm$ 0.0} & 560 {\scriptsize $\pm$ 19} & 19.8 {\scriptsize $\pm$ .7} & .076 {\scriptsize $\pm$ .006} \\
		& $q=1$ & 66.5 {\scriptsize $\pm$ .2} & \textbf{23.0}  {\scriptsize $\pm$ 1.4} & 100.0 {\scriptsize $\pm$ 0.0}& \textbf{509} {\scriptsize $\pm$ 30} & \textbf{18.8} {\scriptsize $\pm$ .5} & \textbf{.067} {\scriptsize $\pm$ .006}\\
		\hline
		\multirow{2}{*}{Shakespeare} & uniform & 50.9 {\scriptsize $\pm$ .4}  & 41.0 {\scriptsize $\pm$ 3.7} & \textbf{70.6} {\scriptsize $\pm$ 5.4} & 71 {\scriptsize $\pm$ 38} & 9.1 {\scriptsize $\pm$ 2.8} & .012 {\scriptsize $\pm$ .006} \\
		& $q=.001$ & 52.1 {\scriptsize $\pm$ .3} & \textbf{42.1} {\scriptsize $\pm$ 2.1} & 69.0 {\scriptsize $\pm$ 4.4} & \textbf{54} {\scriptsize $\pm$ 27} & \textbf{7.9} {\scriptsize $\pm$ 2.3} & \textbf{.009} {\scriptsize $\pm$ .05} \\
		\bottomrule
	\end{tabular}
\end{table}

\subsection{Additional Experiments} \label{app:additional_exp}

{\textbf{Effects of data heterogeneity and the number of devices on unfairness.} To study how data heterogeneity and the total number of devices affect unfairness in a more direct way, we investigate into a set of synthetic datasets where we can quantify the degree of heterogeneity. The results are shown in Table~\ref{table: hetero_effect} below. We generate three synthetic datasets following the process described in Appendix~\ref{appen: data}, but with different parameters to control heterogeneity. In particular, we generate an IID data--- Synthetic (IID) by setting the same $W$ and $b$ on all devices and setting the samples $x_k \sim \mathcal{N}(0,1)$ for any device $k$. We instantiate two non-identically distributed datasets (Synthetic (1, 1) and Synthetic (2, 2)) from Synthetic ($\alpha$, $\beta$) where $u_k \sim \mathcal{N}(0, \alpha)$ and $B_k \sim \mathcal{N}(0, \beta)$. Recall that $\alpha, \beta$ allows to precisely manipulate the degree of heterogeneity with larger $\alpha, \beta$ values indicating more statistical heterogeneity. Therefore, from top to bottom in Table~\ref{table: hetero_effect}, data are more heterogeneous. For each dataset, we further create two variants with different number of participating devices. We see that as data become more heterogeneous and as the number of devices in the network increases, the accuracy distribution tends to be less uniform.}  

\begin{table}[h]
	\caption{{Effects of data heterogeneity and the number of devices on unfairness. For a fixed number of devices, as data heterogeneity increases from top to bottom, the accuracy distributions become less uniform (with larger variance) for both $q=0$ and $q>0$. Within each dataset, the decreasing number of devices results in a more uniform accuracy distribution. In all scenarios (except on IID data), setting $q>0$ helps to encourage more fair solutions.} 
	} 
	\centering
	\label{table: hetero_effect}
	\begin{tabular}{ l | l | cccccc } 
		\toprule
		\multicolumn{2}{c}{\textbf{Dataset}} & \textbf{Objective} & \textbf{Average} & \textbf{Worst 10\%} & \textbf{Best 10\%} & \textbf{Variance} & \\
		\hline
		\multirow{4}{*}{Synthetic (IID)} & \multirow{2}{*}{100 devices} & $q=0$ & 89.2 {\scriptsize $\pm$ .6} & 70.9 {\scriptsize $\pm$ 3} & 100.0 {\scriptsize $\pm$ 0} & 85 {\scriptsize $\pm$ 15} \\
		& & $q=1$ & 89.0 {\scriptsize $\pm$ .5} & 70.3 {\scriptsize $\pm$ 3} & 100.0 {\scriptsize $\pm$ 0} & 88 {\scriptsize $\pm$ 19} \\
		\cline{2-7}
		& \multirow{2}{*}{50 devices} & $q=0$ & 87.1 {\scriptsize $\pm$ 1.5} & 66.5 {\scriptsize $\pm$ 3} & 100.0 {\scriptsize $\pm$ 0}  & 107 {\scriptsize $\pm$ 14}  \\
		& & $q=1$ & 86.8 {\scriptsize $\pm$ 0.8} & 66.5 {\scriptsize $\pm$ 2} & 100.0 {\scriptsize $\pm$ 0} & 109 {\scriptsize $\pm$ 13} \\
		\cline{1-7}
		\multirow{4}{*}{Synthetic (1, 1)} & \multirow{2}{*}{100 devices} & $q=0$ & 83.0 {\scriptsize $\pm$ .9} & 36.8 {\scriptsize $\pm$ 2} & 100.0 {\scriptsize $\pm$ 0} & 452 {\scriptsize $\pm$ 22} \\
		& & $q=1$ & 82.7 {\scriptsize $\pm$ 1.3} & \textbf{43.5} {\scriptsize $\pm$ 5} & 100.0 {\scriptsize $\pm$ 0} & \textbf{362} {\scriptsize $\pm$ 58} \\
		\cline{2-7}
		& \multirow{2}{*}{50 devices} & $q=0$ & 84.5 {\scriptsize $\pm$ .3} & 43.3 {\scriptsize $\pm$ 2} & 100.0 {\scriptsize $\pm$ 0} & 370 {\scriptsize $\pm$ 37}  \\
		& & $q=1$ & 85.1 {\scriptsize $\pm$ .8}  & \textbf{47.3} {\scriptsize $\pm$ 3} & 100.0 {\scriptsize $\pm$ 0} & \textbf{317} {\scriptsize $\pm$ 41} \\
		\cline{1-7}
		\multirow{4}{*}{Synthetic (2, 2)} & \multirow{2}{*}{100 devices} & $q=0$ & 82.6 {\scriptsize $\pm$ 1.1} & 25.5 {\scriptsize $\pm$ 8} & 100.0 {\scriptsize $\pm$ 0} & 618 {\scriptsize $\pm$ 117} \\
		& & $q=1$ & 82.2 {\scriptsize $\pm$ 0.7} & \textbf{31.9} {\scriptsize $\pm$ 6} & 100.0 {\scriptsize $\pm$ 0} & \textbf{484} {\scriptsize $\pm$ 79} \\
		\cline{2-7}
		& \multirow{2}{*}{50 devices} & $q=0$ & 85.9 {\scriptsize $\pm$ 1.0} & 36.8 {\scriptsize $\pm$ 7} & 100.0 {\scriptsize $\pm$ 0} & 421 {\scriptsize $\pm$ 85} \\
		& & $q=1$ & 85.9 {\scriptsize $\pm$ 1.4} & \textbf{39.1} {\scriptsize $\pm$ 6} & 100.0 {\scriptsize $\pm$ 0} & \textbf{396} {\scriptsize $\pm$ 76} \\ 
		\bottomrule
	\end{tabular}
\end{table}

\paragraph{A family of $q$'s results in variable levels of fairness.} In Table \ref{table: variable_q_synthetic}, we show the accuracy distribution statistics of using a family of $q$'s on synthetic data. Our objective and methods are not sensitive to any particular $q$ since all $q>0$ values can lead to more fair solutions compared with $q=0$. In our experiments in Section \ref{sec:eval}, we report the results using the $q$ values selected following the protocol described in Appendix \ref{appen: hyperparameter}.

\setlength{\tabcolsep}{3.2pt}
\begin{table}[h]
	\caption{Test accuracy statistics of using a family of $q$'s on synthetic data. We show results with $q$'s selected from our candidate set $\{0.001, 0.01, 0.1, 1,2,5,10,15\}$. \qffl allows for a more flexible trade-off between fairness and accuracy. A larger $q$ results in more fairness (smaller variance), but potentially lower accuracy. Similarly, a larger $q$ imposes more uniformity in terms of other metrics---(a) the cosine similarity/angle between the accuracy distribution and the all-ones vector {\bf 1}, and (b) the KL divergence between the normalized accuracy $\va$ and a uniform distribution $\vu$.} 
	\centering
	\label{table: variable_q_synthetic}
	\begin{tabular}{ l | l | cccccc } 
		\toprule
		\textbf{Dataset} & \textbf{Objective} & \textbf{Average} & \textbf{Worst 10\%} & \textbf{Best 10\%} & \textbf{Variance}  & \textbf{Angle} & \textbf{KL($\va$||$\vu$)} \\
		& & (\%) & (\%) & (\%) &  & ($^\circ$)\\
		\hline
		\multirow{5}{*}{Synthetic} & $q$=0 & 80.8 {\scriptsize $\pm$ .9} & 18.8 {\scriptsize $\pm$ 5.0} & 100.0 {\scriptsize $\pm$ 0.0} & 724 {\scriptsize $\pm$ 72}  & 19.5 {\scriptsize $\pm$ 1.1} & .083 {\scriptsize $\pm$ .013} \\
		& $q$= 0.1 &  81.1 {\scriptsize $\pm$ 0.8} & 22.1 {\scriptsize $\pm$ .8} & 100.0 {\scriptsize $\pm$ 0.0} & 666 {\scriptsize $\pm$ 56} & 18.4 {\scriptsize $\pm$ .8} & .070 {\scriptsize $\pm$ .009}\\
		& $q$=1 & 79.0 {\scriptsize $\pm$ 1.2} & 31.1 {\scriptsize $\pm$ 1.8} & 100.0 {\scriptsize $\pm$ 0.0} & 472 {\scriptsize $\pm$ 14}  &  16.0 {\scriptsize $\pm$ .5} & .049 {\scriptsize $\pm$ .003} \\
		& $q$=2 & 74.7 {\scriptsize $\pm$ 1.3} & 32.2 {\scriptsize $\pm$ 2.1} & 99.9 {\scriptsize $\pm$ .2} & 410 {\scriptsize $\pm$ 23} & 15.6 {\scriptsize $\pm$ 0.7} & .044 {\scriptsize $\pm$ .005}\\
		& $q$=5 & 67.2 {\scriptsize $\pm$ 0.9} & 30.0 {\scriptsize $\pm$ 4.8} & 94.3 {\scriptsize $\pm$ 1.4} & 369 {\scriptsize $\pm$ 51} & 16.3 {\scriptsize $\pm$ 1.2} & .048 {\scriptsize $\pm$ .010}\\
		\bottomrule
	\end{tabular}
\end{table}

\paragraph{Device-specific $q$.} In these experiments, we explore a device-specific strategy for selecting $q$ in \qffl. We solve \qffl with $q \in \{0, 0.001, 0.01, 0.1, 1, 2, 5, 10\}$ in parallel. After training, each device selects the best resulting model based on the validation data and tests the performance of the model using the testing set. We report the results in terms of testing accuracy in Table \ref{table: multipleq}. Interestingly, using this device-specific strategy the average accuracy in fact increases while the variance of accuracies is reduced, in comparison with $q=0$. We note that this strategy does induce more local computation and additional communication load at each round. However, it does not increase the number of communication rounds if run in parallel. 

\setlength{\tabcolsep}{2.8pt}
\begin{table}[h]
	\caption{Effects of running \qffl with several $q$'s in parallel. We train multiple global models (corresponding to different $q$'s) independently in the network. After the training finishes, each device picks up a best, device-specific model based on the performance (accuracy) on the validation data. While this adds additional local computation and more communication load per round, the device-specific strategy has the added benefit of increasing the accuracies of devices with the worst 10\% accuracies and devices with the best 10\% accuracies simultaneously. This strategy is built upon the proposed primitive Algorithm \ref{alg: ffl}, and in practice, people can develop other heuristics to improve the performance (similar to what we explore here), based on the method of adaptively averaging model updates proposed in Algorithm \ref{alg: ffl}. 
	} 
	\centering
	\label{table: multipleq}
	\begin{tabular}{ l | l | cccccc } 
		\toprule
		\textbf{Dataset} & \textbf{Objective} & \textbf{Average} & \textbf{Worst 10\%} & \textbf{Best 10\%} & \textbf{Variance} & \textbf{Angle} & \textbf{KL($\va$||$\vu$)} \\
		& & (\%) & (\%) & (\%) &  & ($^\circ$)\\
		\hline
		\multirow{3}{*}{Vehicle} & $q$=0 & 87.3 {\scriptsize $\pm$ .5} & 43.0 {\scriptsize $\pm$ 1.0} & 95.7 {\scriptsize $\pm$ 1.0} & 291 {\scriptsize $\pm$ 18} & 11.3 {\scriptsize $\pm$ .3} & .031 {\scriptsize $\pm$ .003} \\
		& $q$=5 & 87.7 {\scriptsize $\pm$ .7} & 69.9 {\scriptsize $\pm$ .6} & 94.0 {\scriptsize $\pm$ .9} & 48 {\scriptsize $\pm$ 5} & 4.6 {\scriptsize $\pm$ .2} & .003 {\scriptsize $\pm$ .000} \\
		& multiple $q$ & 88.5 {\scriptsize $\pm$ .3} & 70.0 {\scriptsize $\pm$ 2.0} & 95.8 {\scriptsize $\pm$ .6}  & 52 {\scriptsize $\pm$ 7} & 4.7  {\scriptsize $\pm$ .3} & .004  {\scriptsize $\pm$ .000} \\
		\hline
		\multirow{3}{*}{Shakespeare} & $q$=0 & 51.1 {\scriptsize $\pm$ .3} & 39.7 {\scriptsize $\pm$ 2.8} & 72.9 {\scriptsize $\pm$ 6.7} & 82 {\scriptsize $\pm$ 41} & 9.8 {\scriptsize $\pm$ 2.7} & .014 {\scriptsize $\pm$ .006} \\
		& $q$=.001 & 52.1 {\scriptsize $\pm$ .3}  & 42.1 {\scriptsize $\pm$ 2.1} & 69.0 {\scriptsize $\pm$ 4.4} & 54 {\scriptsize $\pm$ 27} & 7.9 {\scriptsize $\pm$ 2.3} & .009 {\scriptsize $\pm$ .05}\\
		& multiple $q$ & 52.0 {\scriptsize $\pm$ 1.5} &  41.0 {\scriptsize $\pm$ 4.3} &  72.0 {\scriptsize $\pm$ 4.8} & 72 {\scriptsize $\pm$ 32} & 10.1  {\scriptsize $\pm$ .7} & .017  {\scriptsize $\pm$ .000} \\
		\bottomrule
	\end{tabular}
\end{table}

\vspace{-1em}
\paragraph{Convergence speed of \qffl.} 
Since $\qffl~(q>0)$ is more difficult to optimize, a natural question one might ask is: will the \qffl $q>0$ objectives slow the convergence compared with \fedavg? We empirically investigate this on the four datasets. We use \qfflavg to solve \qffl, and compare it with \fedavg (i.e., solving \qffl with $q=0$). As demonstrated in Figure \ref{fig: convergence}, the $q$ values that result in more fair solutions also do not significantly slow down convergence.

\vspace{-1em}
\begin{figure}[h]
	\centering
	\includegraphics[width=0.92\textwidth]{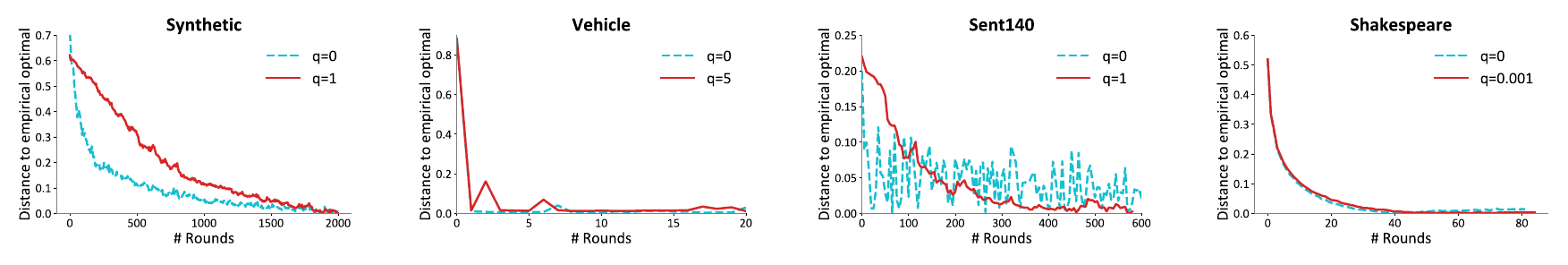}
	\caption{The convergence speed of \qffl compared with \fedavg. We plot the distance to the highest accuracy achieved versus communication rounds. Although  \qffl with $q$>0 is a more difficult optimization problem, for the $q$ values we choose that could lead to more fair results, the convergence speed is comparable to that of $q=0$.}
	\label{fig: convergence}
\end{figure}

\paragraph{Efficiency of \qffl compared with \afl.} One added benefit of \qffl is that it leads to faster convergence than \afl---even when we use \emph{non-local-updating} methods for both objectives. In Figure \ref{fig: compare_efficiency_agnostic}, we show with respect to the final testing accuracy for the single worst device (i.e., the objective that \afl is trying to optimize), \qffl converges faster than \afl. As the number of devices increases (from Fashion MNIST to Vehicle), the performance gap between \afl and \qffl becomes larger because \afl introduces larger variance.

\begin{figure}[h]
	\centering
	\includegraphics[width=.65\textwidth]{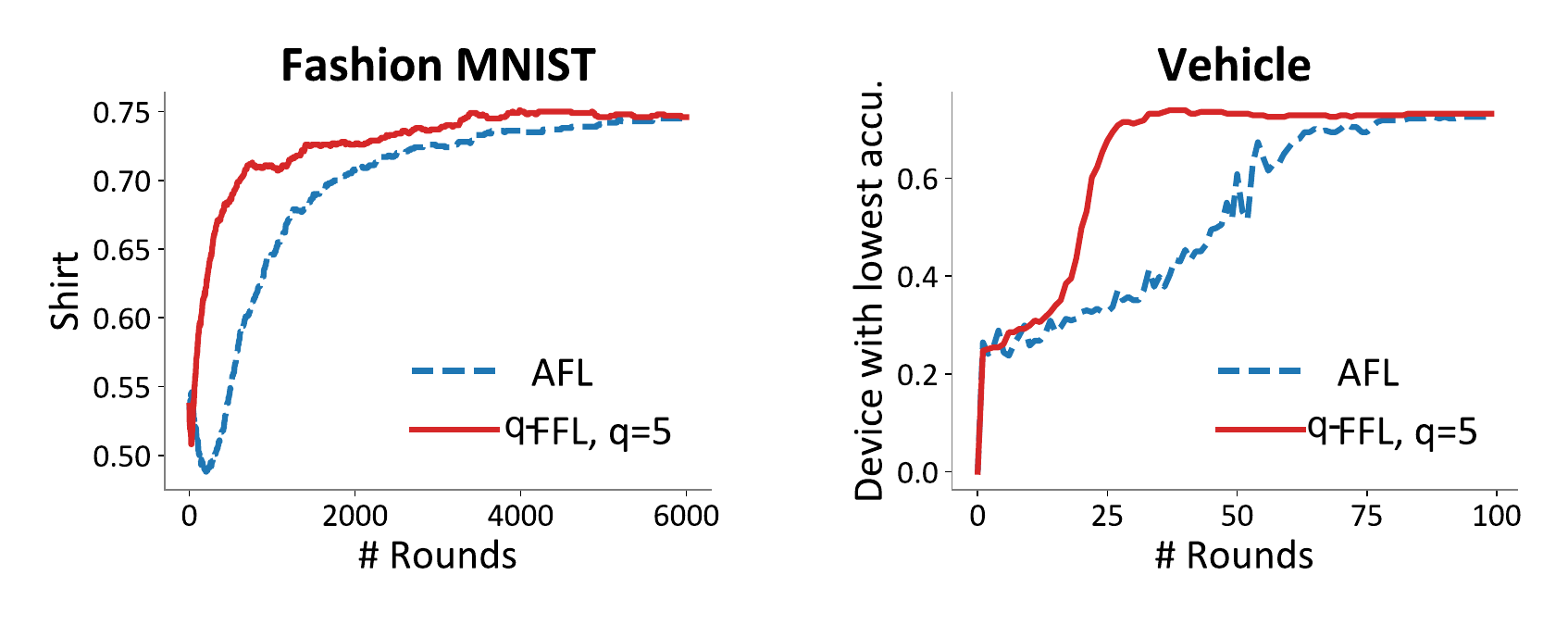}
	\caption{\qffl is more efficient than \afl. With the worst device achieving the same final testing accuracy, \qffl converges faster than \afl. For Vehicle (with 23 devices) as opposed to Fashion MNIST (with 3 devices), we see that the performance gap is larger. We run full gradient descent at each round for both methods.}
	\label{fig: compare_efficiency_agnostic}
\end{figure}

\paragraph{Efficiency of \qfflavg under different data heterogeneity.} As mentioned in Appendix \ref{appen:full_exp_full_results}, one potential cause for the slower convergence of \qfflavg on the synthetic dataset may be that local updating schemes could hurt convergence when local data distributions are highly heterogeneous. Although it has been shown that applying updates locally results in significantly faster convergence in terms of communication rounds~\citep{mcmahan2016communication, smith2018cocoa}, which is consistent with our observation on most datasets, we note that when data is highly heterogeneous, local updating may hurt convergence. We validate this by creating an IID synthetic dataset (Synthetic-IID) where local data on each device follow the same global distribution. We call the synthetic dataset used in Section \ref{sec:eval} Synthetic-Non-IID. We also create a hybrid dataset (Synthetic-Hybrid) where half of the total devices are assigned IID data from the same distribution, and half of the total devices are assigned data from different distributions. We observe that if data is perfectly IID, \qfflavg is more efficient than \qfflsgd. As data become more heterogeneous, \qfflavg converges more slowly than \qfflsgd in terms of communication rounds. For all three synthetic datasets, we repeat the process of tuning a best constant step-size for \fedsgd and observe similar results as before --- our dynamic solver \qfflsgd behaves similarly (or even outperforms) a best hand-tuned \fedsgd.

\begin{figure}[h]
	\centering
	\includegraphics[width=\textwidth]{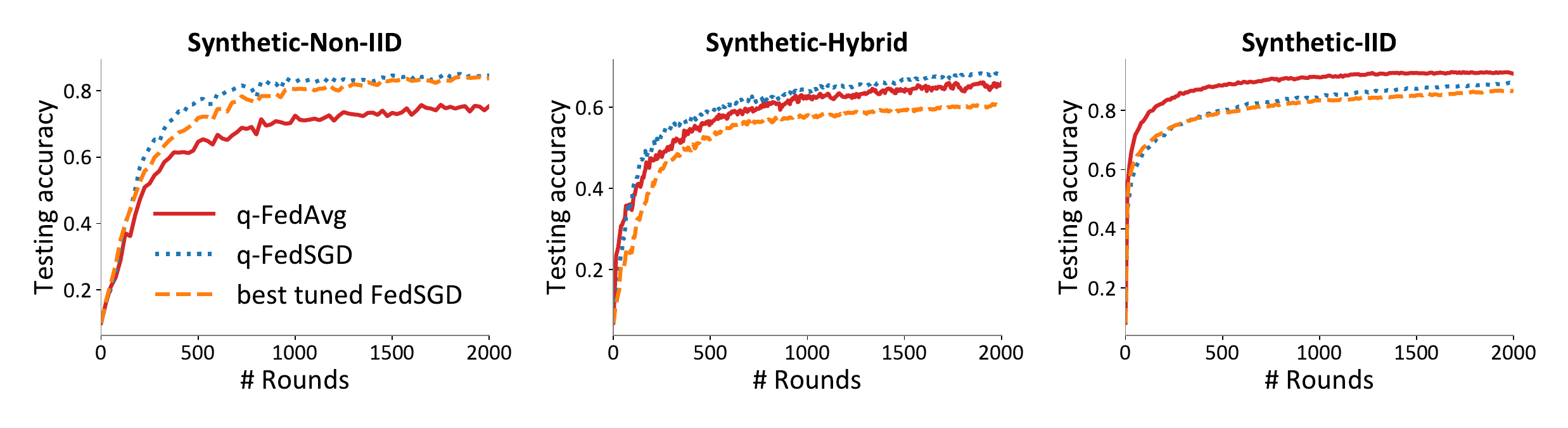}
	\caption{Convergence of \qfflavg compared with \qfflsgd under different data heterogeneity. When data distributions are heterogeneous, it is possible that \qfflavg converges more slowly than \qfflsgd. Again, the proposed dynamic solver \qfflsgd performs similarly (or better) than a best tuned fixed-step-size \fedsgd.}
	\label{fig: heterogeneity}
\end{figure}

\end{document}